\let\AND\relax
\newcommand{\fidp}{\mathsf{Fid}^+}
\newcommand{\fidm}{\mathsf{Fid}^-}
\newcommand{\spars}{\mathsf{Sparsity}}
\newcommand{\cons}{\mathsf{Consensus}}
\newcommand{\ri}{\mathsf{R.I.}}
\newcommand{\eq}[1]{
    \begin{align}
        #1
    \end{align}
}
\newcommand{\picc}[1]{Fig.~\ref{fig:#1}}
\newcommand{\tabc}[1]{Table~\ref{tab:#1}}
\newcommand{\algc}[1]{Algorithm~\ref{alg:#1}}
\newcommand{\eqc}[1]{Eq.~(\ref{eq:#1})}
\newcommand{\secc}[1]{Section~\ref{sec:#1}}
\newcommand{\bfa}[1]{\textbf{#1}}
\newcommand{\ol}[1]{ 
    \begin{outline}[enumerate]
    #1
    \end{outline}
}
\definecolor{brightorange}{RGB}{255,127,0} 
\definecolor{tdbg}{HTML}{FFCC00}
\definecolor{fixbg}{HTML}{CC0000}
\definecolor{revbg}{HTML}{EB0CB7}
\definecolor{checkbg}{HTML}{95FA2F}
\definecolor{changebg}{HTML}{12C915}
\definecolor{ongoingbg}{HTML}{19FFA5}
\definecolor{ablatebg}{HTML}{1F3FF5}
\definecolor{modbg}{HTML}{CC00CC}
\definecolor{tdlbg}{HTML}{BFFFF0}
\definecolor{tdnbg}{HTML}{FFE3BF}
\definecolor{notebg}{HTML}{FF0000}
\definecolor{donebg}{HTML}{00C3FF}
\definecolor{textdark}{HTML}{1F1F1F}
\newcommand{\cA}{{\mathcal{A}}}
\newcommand{\cG}{\mathcal{G}}
\newcommand{\cV}{\mathcal{V}}
\newcommand{\cE}{\mathcal{E}}
\newcommand{\R}{\ensuremath{\mathbb R}}
\newcommand{\cP}{\EuScript{P}}
\newcommand{\best}[1]{\bm{#1}} 
\newcommand{\sbest}[1]{\underline{#1}} 
\newtheorem{theorem}{Theorem} 
\newtheorem{corollary}[theorem]{Corollary}
\author{
    \name\!\!Haribandhu Jena \email haribandhu.jena@niser.ac.in\\
    \addr School of Computer Sciences\\
    National Institute of Science Education and Research\\
    An OCC of Homi Bhabha National Institute, India\\\\
    \AND
    \name Jyotirmaya Shivottam \email jyotirmaya.shivottam@niser.ac.in\\
    \addr School of Computer Sciences\\
    National Institute of Science Education and Research\\
    An OCC of Homi Bhabha National Institute, India\\\\
    \AND
    \name Subhankar Mishra \email smishra@niser.ac.in\\
    \addr School of Computer Sciences\\
    National Institute of Science Education and Research\\
    An OCC of Homi Bhabha National Institute, India
}
\definecolor{customblue}{HTML}{2e47be}
\definecolor{hidden-blue}{RGB}{215,238,249}
\definecolor{hidden-black}{RGB}{20,68,106}
\definecolor{takeaway-blue}{RGB}{218,227,243}
\definecolor{takeaway-title-blue}{RGB}{51,74,133}
\definecolor{authorYellow}{HTML}{ff9100}
\definecolor{authorGreen}{HTML}{59C4B8}
\definecolor{authorBlue}{HTML}{4C9CFF}
\definecolor{authorPurple}{HTML}{AF73FA}
\definecolor{authorPink}{HTML}{f0539b}
\definecolor{authorCyan}{HTML}{548c29}
\title{QGraphLIME - Explaining Quantum Graph Neural Networks}
\date{\today}
\begin{document}
\maketitle

\begin{abstract}
Quantum graph neural networks offer a powerful paradigm for learning on graph-structured data, yet their explainability is complicated by measurement-induced stochasticity and the combinatorial nature of graph structure. In this paper, we introduce \textbf{QuantumGraphLIME (QGraphLIME)}, a model-agnostic, post-hoc framework that treats model explanations as distributions over local surrogates fit on structure-preserving perturbations of a graph. By aggregating surrogate attributions together with their dispersion, QGraphLIME yields uncertainty-aware node and edge importance rankings for quantum graph models. The framework further provides a distribution-free, finite-sample guarantee on the size of the surrogate ensemble: a Dvoretzky-Kiefer-Wolfowitz bound ensures uniform approximation of the induced distribution of a binary class probability at target accuracy and confidence under standard independence assumptions. Empirical studies on controlled synthetic graphs with known ground truth demonstrate accurate and stable explanations, with ablations showing clear benefits of nonlinear surrogate modeling and highlighting sensitivity to perturbation design. Collectively, these results establish a principled, uncertainty-aware, and structure-sensitive approach to explaining quantum graph neural networks, and lay the groundwork for scaling to broader architectures and real-world datasets, as quantum resources mature. Code is available at \href{https://github.com/smlab-niser/qglime}{https://github.com/smlab-niser/qglime}.
\end{abstract}

\section{Introduction}
Quantum Neural Networks (QNNs) mark a shift in computational paradigm, leveraging quantum superposition and entanglement to form rich data representations that can outperform classical models in expressivity under certain resource constraints~\citep{qnnrev}. However, their outputs are inherently probabilistic due to quantum measurement, introducing randomness that complicates model explanations~\citep{qlime}. Graph Neural Networks (GNNs) have proven highly effective for structured data across various graph learning tasks such as node classification, edge prediction, or graph classification, by exploiting graph topology~\citep{gnnsurv}. Merging QNNs with GNN design principles to produce Quantum Graph Neural Networks (QGNNs) promises a more efficient approach for graph learning tasks by leveraging quantum computing advantages~\citep{verdon,skolkik,eqgc,qgnnrev}, but it also leads to explainability challenges, as existing classical explanation methods for GNNs~\citep{xgnnrev} cannot account for quantum state collapse, measurement noise, and graph complexity simultaneously. Without robust explainability and uncertainty quantification techniques, decisions based on quantum graph neural networks remain opaque and unsuitable for high-stakes applications, risking their adoption in regulated domains such as drug discovery and medical device development.

To address these challenges, model-agnostic explanation methods can be particularly valuable, as they can be applied across diverse architectures without dependence on model parameters~\citep{xaidoshi}. Several of such methods explain complex neural networks by locally approximating them with simpler \emph{surrogate} models such as logistic regression, thus enabling interpretable insights while preserving methodological flexibility. Local Interpretable Model-Agnostic Explanations (LIME) fits sparse linear models on perturbed versions of an input to reveal feature importance for the specific prediction, corresponding to that input~\citep{lime}. SHapley Additive exPlanations (SHAP) uses game-theoretic Shapley values to attribute each feature's contribution to the prediction \citep{shap}. For graph neural networks, GraphLIME adapts LIME with Hilbert-Schmidt Independence Criterion (HSIC) Lasso to select nonlinear combinations of node features and subgraph patterns that best explain model predictions~\citep{glime}. StGraphLIME~\citep{stglime} extends this with Group Lasso~\citep{groupl} to identify key nodes and connected substructures, improving explanation quality on graph and graph-series classification. In quantum ML, Q-LIME employs an ensemble of surrogates to accommodate measurement randomness in parameterized quantum circuits (PQCs), averaging explanations over repeated circuit runs to get a Region of Indecision for quantifying the QNN's uncertainty ~\citep{qlime}.

However, the discussed methods are either tailored for classical models or restricted to non-graph quantum neural networks. To date, no dedicated framework exists for providing explanations specifically for quantum graph neural networks; the field lacks model-agnostic or post-hoc techniques that can simultaneously account for quantum measurement noise and maintain graph structural coherence in explanations. To evaluate the robustness of these explanations in the presence of quantum measurement randomness, we introduce an ensemble-based evaluation procedure. In this approach, the trained QGNN is repeatedly queried on perturbed versions of the input graph, generating corresponding prediction labels for each input. These perturbed inputs and labels are then passed to an HSIC-based non-linear surrogate to compute feature or substructure-level explanation scores. By repeating this procedure across an ensemble of multiple surrogates, we capture the variability introduced by quantum stochasticity and quantify the explanation stability. Our approach aligns conceptually with Q-LIME's probabilistic local interpretability, and following its integration of classical data with a quantum model, we adopt a similar strategy by operating on classical graph data with a QGNN.

\paragraph{Contributions.}
\ol{
    \1 We present \bfa{QuantumGraphLIME (QGraphLIME)}, a \emph{model-agnostic}, post-hoc explainer that treats explanations as \emph{distributions} over local surrogates fit on structure-preserving graph perturbations, yielding uncertainty-aware node/edge rankings.
    \1 We establish a \emph{distribution-free} finite-sample guarantee on the \emph{minimum} surrogate ensemble size via the Dvoretzky--Kiefer--Wolfowitz bound, with a simultaneous multi-graph/multi-statistic extension by the union bound.
    \1 We leverage \emph{nonlinear HSIC-based surrogates} (HSIC-L1, HSIC-Group) to capture graph-dependent, group-structured dependencies, producing sparse, coherent attributions compatible with general quantum graph neural networks, which we validate with equivariant quantum graph models.
    \1 We introduce \emph{ensemble confidence reporting} (e.g., inclusion probabilities, IQR-based dispersion, intervention flip rates) and a principled evaluation protocol that integrates top-$k$ accuracy variants, keep/remove fidelity, sparsity, and stability.
    \1 We present comprehensive \emph{ablations} (perturbation type, surrogate nonlinearity, measurement regimes) and controlled evaluations on synthetic graphs with ground truth, demonstrating accurate and stable explanations, while clarifying trade-offs.
}

\section{Related Work}
Classical model-agnostic explainability methods enable insight into black-box models without accessing their internal parameters~\citep{xaidoshi}. Local approaches, such as LIME, approximate the behavior of a complex model around a specific input by fitting a simple surrogate model, usually a sparse-linear model, to perturbed samples, revealing which features drive an individual prediction~\citep{lime}. SHAP extends this idea via Kernel SHAP, which leverages Shapley values from cooperative game theory to assign each feature an additive importance score that satisfies consistency and accuracy properties~\citep{shap}. Global methods, in contrast, aim to capture overall model behavior: for example, partial dependence plots visualize how varying a feature influences predictions on average~\citep{pdp}, while global surrogate models approximate the entire decision surface~\citep{globalx}. Perturbation-based techniques underpin both LIME and SHAP by systematically altering input features and observing output changes, although they differ in how samples are weighted and how feature interactions are accounted for, to maintain fidelity to the original model.

Extending these approaches to graph data introduces additional challenges. In GNNs, node features are propagated and aggregated according to the graph's connectivity, meaning that predictions are determined not only by individual features but also by relational patterns across multiple nodes and edges~\citep{mpnn,gcn,gat}. Consequently, methods designed for tabular or image data cannot be directly applied to GNNs, as they fail to account for the combinatorial dependencies inherent in graph structures~\citep{xgnnrev}. Model-agnostic interpretability frameworks, like LIME, that address black-box explanations by fitting simple surrogate models locally around instances, do not factor in the graph structure, leading to explanations that may not reflect what drives the GNN output~\citep{glime}. To capture structural dependencies, methods focusing on subgraph-level explanations have emerged. GNNExplainer learns masks on edges and node features by maximizing mutual information between masked and original predictions, but it relies on non-convex optimization without global optimality guarantees~\citep{gnne}. PGExplainer uses a parameterized neural network to generate explanations, improving generalization across multiple graphs~\citep{pge}. SubgraphX searches for important connected subgraphs using Monte Carlo Tree Search and employs Shapley values to measure subgraph contributions, offering interpretable explanations at the cost of more complex computations~\citep{subx}. Building on these insights, convex statistical frameworks have been introduced to enhance both stability and interpretability in graph explanations. GraphLIME extends LIME to graphs by sampling the local N-hop neighborhood and using HSIC Lasso for nonlinear feature selection~\citep{glime,hsic}. This approach ensures convex optimization and yields stable attributions of node features that best explain local GNN behavior~\citep{glime}. StGraphLIME~\citep{stglime} builds on this by introducing Group~\citep{groupl} and Fused Lasso~\citep{fusedl} regularization to select coherent subgraphs as explanations. It perturbs input graphs and applies HSIC-based selection on these perturbations, identifying substructures critical for GNN predictions. These developments mark a shift from simple feature attribution to interpretable, structure-aware explanation methods, that reveal subgraphs or clusters guiding GNN decisions, enhancing both fidelity and interpretability.

Explaining quantum neural networks (QNNs) carries additional complexity due to the probabilistic nature of quantum measurements and high-dimensional Hilbert spaces~\citep{qnnrev}. Classical explainability tools cannot be applied directly because QNN model outputs are probabilistic and noisy. Q-LIME~\citep{qlime} extends LIME to quantum settings by perturbing inputs and fitting an ensemble of local surrogate models to measurement outcomes for each input. This ensemble-based approach helps delineate regions where randomness from quantum measurements dominates, offering insight into model behavior under stochastic regimes. Other techniques include SVQX, which applies Shapley values over quantum feature subsets, and hybrid classical-quantum explainers that approximate quantum circuits with differentiable classical surrogates for interpretability~\citep{qshap,powerxaiqnn}. Explaining QNN face broader challenges such as the inability to account for phase information, entanglement, and non-commutative feature representations, as highlighted by \citet{shapig}.

While extensive research has explored explainability for classical machine learning models~\citep{xaidoshi} as well as graph neural networks~\citep{gnnrev}, to the best of our knowledge, no methods currently exist for explaining the predictions of Quantum Graph Neural Networks. Existing quantum explainability techniques focus on standalone quantum neural circuits or hybrid models, but do not address the relational structure of graph data processed by QGNNs. This gap highlights the need for dedicated approaches that can account for both quantum coherence and graph topology in generating meaningful explanations. In this work, we take a first step toward addressing this need by proposing an explanation framework tailored specifically to QGNNs.

\section{Background}
\subsection{Graph Neural Networks (GNNs)}

Let \(\cG = (\cV, \cE, X)\) denote an undirected graph with node set \(\cV = \{v_1, \dots, v_n\}\), edge set \(\cE \subseteq \cV \times \cV\), and node feature matrix \(X = [\mathbf{x}_v] \in \R^{n \times d}\), where \(\mathbf{x}_v \in \R^d\) is the feature vector for node \(v\). A GNN~\citep{mpnn} computes node representations by iterative message passing and update operations. Denote by \(\mathbf{h}_v^{(l)} \in \R^h\) the hidden state of node \(v\) at layer \(l\), with initial state \(\mathbf{h}_v^{(0)} = \mathbf{x}_v\). The message passing step aggregates information from the neighbors \(\mathcal{N}(v) = \{u : (u,v) \in \cE\} \cup \{v\} \):
\[
\mathbf{m}_v^{(l)} 
= \sum_{u \in \mathcal{N}(v)} M^{(l)}\bigl(\mathbf{h}_v^{(l-1)},\,\mathbf{h}_u^{(l-1)},\,\mathbf{x}_{uv}\bigr),
\]
where, \(M^{(l)}: \R^h \times \R^h \times \R^e \to \R^h\) is a learnable message function and \(\mathbf{x}_{uv} \in \R^e\) denotes optional edge features. The update step then computes the new node state:
\[
\mathbf{h}_v^{(l)} 
= U^{(l)}\bigl(\mathbf{h}_v^{(l-1)},\,\mathbf{m}_v^{(l)}\bigr),
\]
where, \(U^{(l)}: \R^h \times \R^h \to \R^h\) is a learnable update function. For graph-level tasks, a readout function \(R\) aggregates the final node states \(\{\mathbf{h}_v^{(L)}\}\) into a single graph representation:
\[
\mathbf{h}_{\cG} \;=\; R\bigl(\{\mathbf{h}_v^{(L)} : v \in \cV\}\bigr),
\]
which is then passed to a task-specific classifier or regressor.

\subsection{Structural GraphLIME (StGraphLIME)}

Structural GraphLIME~\citep{stglime} extends local feature explanations to the graph structure by identifying coherent subgraphs that drive the prediction of a GNN, instead of merely selecting individual features. By leveraging kernel-based (in)dependence measures and structural sparsity penalties, StGraphLIME highlights connected groups of nodes and edges, whose joint variation most strongly influences the model's output. The following paragraphs review the mathematical tools that make this approach effective.

\paragraph{Hilbert-Schmidt Independence Criterion (HSIC)}

Given a set of perturbed feature-output pairs $\{(\mathbf{x}_i,\mathbf{y}_i)\}_{i=1}^p$, choose positive-definite kernels $k:\R^p\times\R^p\to\R$ and $\ell:\R^p\times\R^p\to\R$. Define the raw Gram matrices as:
\[
    K^{\mathrm{raw}}_{ij}=k(\mathbf{x}_i,\mathbf{x}_j),
    \quad
    L^{\mathrm{raw}}_{ij}=\ell(\mathbf{y}_i,\mathbf{y}_j),
\]
and the centering matrix as:
\[
    H=I_n-\frac{1}{n}\mathbf{1}_n\mathbf{1}_n^\top.
\]
The normalized centered Gram matrices are
\[
    K = \frac{H\,K^{\mathrm{raw}}\,H}{\|H\,K^{\mathrm{raw}}\,H\|_F},
    \quad
    L = \frac{H\,L^{\mathrm{raw}}\,H}{\|H\,L^{\mathrm{raw}}\,H\|_F},
\]
where, $\|\cdot\|_F$ denotes the Frobenius norm. The normalized HSIC is then
\[
    \mathrm{HSIC}(\mathbf{x},\mathbf{y})
    =
    \mathrm{tr}(K\,L).
\]
For Gaussian kernels,
\[
    k(\mathbf{x}_i,\mathbf{x}_j)
    =
    \exp\!\biggl(-\frac{\|\mathbf{x}_i-\mathbf{x}_j\|_2^2}{2\sigma_x^2}\biggr),
    \quad
    \ell(\mathbf{y}_i,\mathbf{y}_j)
    =
    \exp\!\biggl(-\frac{\|\mathbf{y}_i-\mathbf{y}_j\|_2^2}{2\sigma_y^2}\biggr),
\]
where, $\sigma_x>0$ and $\sigma_y>0$ are the kernel bandwidths.

\paragraph{HSIC Lasso (HSIC-L1) for individual node/edge selection.}
Let $\{\hat{\cG}_i\}_{i=1}^p$ be perturbed graphs of $\cG$, and let $L\in\R^{p\times p}$ be the normalized Gram matrix of predictions $\Phi(\hat{\cG}_i)$. For each binary mask over nodes/edges, $\nu \in \cA$, let $K_\nu\in\R^{p\times p}$ be the normalized Gram matrix computed from perturbations affecting $\nu$. Then, the HSIC Lasso objective for selecting nodes or edges is
\eq{
    \min_{\mathbf{\alpha} \in \R^{|\cA|}}
    \biggl\{
        \frac{1}{2}\bigl\lVert L - \sum_{\nu\in\cA}\alpha_\nu\,K_\nu\bigr\rVert_F^2
        \;+\;
        \lambda\sum_{\nu\in\mathcal{A}}|\alpha_\nu|
    \biggr\},
    \quad \alpha_\nu \ge 0,
    \label{eq:l1}
}
where, $\lambda > 0$ is a sparsity parameter, and $\alpha_\nu$ indicates the importance score, where $\cA$ can be $\cV$ or $\cE$.

\paragraph{Group-Regularized HSIC Lasso (HSIC-G).}
Let $\Pi$ be a collection of overlapping groups, $\pi \subseteq \cA$. For each $\pi\in\Pi$, define $\mathbf{\gamma}_\pi \coloneqq (\alpha_v)_{v\in\pi}$. The group-regularized HSIC lasso is
\eq{
    \min_{\mathbf{\alpha}  \in \R^{|\cA|}}
    \biggl\{
        \frac{1}{2}\bigl\lVert L - \sum_{\nu\in\cA}\alpha_\nu\,K_\nu\bigr\rVert_F^2
        \;+\;
        \lambda\sum_{\pi\in\Pi}\|\mathbf{\gamma}_\pi\|_2
    \biggr\},
    \quad \alpha_\nu\ge0.
    \label{eq:grp}
}
Here, the group penalty term $|\boldsymbol{\gamma}_\pi|_2$ encourages the selection of entire groups $\pi$, while $\mathcal{A}$, as before, denotes either $\mathcal{V}$ or $\mathcal{E}$, thereby promoting structural coherence. Here, $\pi$ represents typically connected subsets of nodes or edges that collectively influence the model's prediction. For nodes, groups are defined by 1-hop neighborhoods, whereas for edges, grouping is based on shared incident nodes.

\subsection{Equivariant Quantum Graph Circuits (EQGCs)}

EQGCs represent a specialized class of parameterized quantum circuits designed to process graph-structured data~\citep{eqgc}. The circuits are designed to respect the symmetry of such data by enforcing the equivariance condition with respect to node permutations. Formally, EQGCs take as input a graph represented by its adjacency matrix \(A \in \{0,1\}^{n \times n}\) and node features \(\{x_v\}_{v=1}^n\). The graph features are encoded into a quantum state \(|v\rangle = |v_1\rangle \otimes \cdots \otimes |v_n\rangle\), where each \(|v_i\rangle\) encodes the feature vector \(x_{v_i}\) on a local Hilbert space associated with node \(i\). The circuit defines a unitary operator \(C_\theta(A)\), parameterized by \(\theta\), and structured depending on the graph topology:
\[
    C_\theta(P^\top A P) = \tilde{P}^\top C_\theta(A) \tilde{P},
\]
where, \(P\) is a node permutation matrix and \(\tilde{P}\) is the corresponding unitary acting on the total Hilbert space, ensuring equivariance. Two important classes of EQGCs exist: Equivariant Hamiltonian QGCs (EH-QGCs) and Equivariantly Diagonalizable Unitary QGCs (EDU-QGCs). EH-QGCs encode symmetry directly through Hamiltonian-based edge and node operators, while EDU-QGCs achieve equivariance via diagonalizable commuting two-node unitaries, parameterized by local rotations. In this work, we employ EDU-QGCs due to their simpler and more tractable implementation. The model output is obtained by measuring in the computational basis, yielding outcome probabilities that depend on both the quantum parameters and graph structure, suitable for graph learning tasks, like classification.

\subsection{Quantum LIME (Q-LIME)}

Let $f_\theta$ denote the quantum model to be explained, $x \in \mathbb{R}^d$ the instance of interest, and $p$ the number of perturbations. Q-LIME~\citep{qlime} constructs local datasets $\{D_i^x\}_{i=1}^m$ within a neighborhood $\pi_x$ of the feature space centered at $x$, each comprising $p$ perturbed instances generated through systematic perturbations of $x$. The model $f_\theta$ is then queried to obtain corresponding labels $f_\theta(D_i^x)$. To quantify the variability of explanations arising from quantum stochasticity, Q-LIME defines a set $\Xi \coloneqq \{S_i\}_{i=1}^m$ of $m$ classical LIME surrogates. Each surrogate $S_i$ is fit on $(D_i^x, f_\theta(D_i^x))$, thereby yielding a local decision boundary $B_i$. The collection of these boundaries, $B = \{B_1, \dots, B_m\}$, is used to characterize uncertainty in the explanation space. Notably, Q-LIME defines a local region of indecision $R$ for a data point $\mathbf{x}' \in X$ as
\[
    R = \left\{ \mathbf{x}' \in X : \left| P\bigl(g(\mathbf{x}') = 1 \mid f_\theta, \Xi, \pi_{\mathbf{x}}\bigr) - \frac{1}{2} \right| < \varepsilon \right\},
\]
where, $g(\mathbf{x}')$ denotes the surrogate model prediction, $\varepsilon > 0$ is a small threshold hyperparameter, and the probability is computed over the surrogate ensemble $\Xi$. Specifically, Q-LIME picks the inter-quartile range $\mathrm{IQR}$ as the summary statistic to quantify the uncertainty. Sorting $B$ yields the first and third quartiles, $Q_1$ and $Q_3$, respectively, and their difference
\[
    \mathrm{IQR}(B) = Q_3 - Q_1
\]
captures the dispersion of decision boundaries arising from quantum measurement noise. A larger $\mathrm{IQR}$ indicates higher uncertainty and reduced confidence in the corresponding explanations.

\section{Quantum GraphLIME}
\label{sec:method}

\begin{figure}[t]
    \centering
    \includegraphics[width=1.0\textwidth]{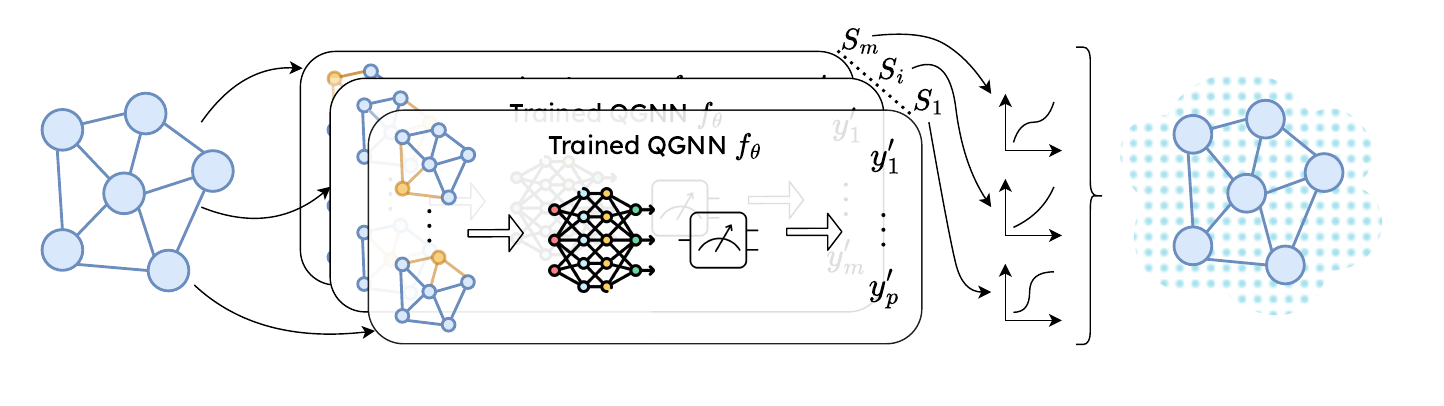}
    \caption{QGraphLIME Workflow: Our method generates locally perturbed graph datasets ($\{D_i\}_{i=1}^m$) from an input graph and evaluates them with a trained QGNN ($f_\theta$) to capture quantum stochasticity. An ensemble of surrogate models $\Xi = \{S_i\}_{i=1}^m$ is fit on $(D_i, f_\theta(D_i))$, with boundary dispersion used to quantify explanation uncertainty. Aggregated surrogate attributions rank influential nodes or edges, yielding stable, uncertainty-aware explanations; see Algorithm~\ref{alg:qgraphlime}.}
    \label{fig:diagram}
\end{figure}

\begin{algorithm}[h]
    \caption{Quantum GraphLIME (QGraphLIME) for Node/Edge Importance Ranking}\label{alg:qgraphlime}
    \begin{algorithmic}[1]
        \STATE \bfa{Inputs:} Trained QGNN $f_{\theta}$, input graph $G=(V,E)$, number of perturbations $m$, measurement shots $s$, number of surrogate models $n$.
        \STATE \bfa{Initialize:} Surrogate ensemble $\Xi = \{S_i\}_{i=1}^n$.
        \STATE \bfa{Step 1:} Generate $m$ locally perturbed graph datasets $\{D_i\}_{i=1}^m$ from $G$; encode perturbations in a binary matrix $Z \in \{0,1\}^{m \times |\cA|}$, $\cA \in \{\cV, \cE\}$.
        \STATE \bfa{Step 2:} Evaluate each $D_i$ $s$ times with $f_\theta$ to capture stochastic predictions.
        \STATE \bfa{Step 3:} Fit each surrogate $S_i$ on $(D_i, f_\theta(D_i))$ to approximate local decision boundaries $B_i$.
        \STATE \bfa{Step 4:} Compute dispersion of $B = \{B_1, \dots, B_m\}$ via graph-structure-aware summary statistics to quantify uncertainty.
        \STATE \textbf{Step 5:} Aggregate surrogate attributions to rank influential nodes/edges and produce stable, uncertainty-aware explanations.
        \STATE \textbf{Outputs:} Surrogate ensemble $\Xi$, node/edge importance scores, summary statistics of uncertainty.
    \end{algorithmic}
\end{algorithm}

In this section, we introduce \bfa{Quantum GraphLIME} (\bfa{QGraphLIME}), a model-agnostic framework that provides principled, post-training explanations for Quantum Graph Neural Networks by explicitly accounting for measurement-induced stochasticity. Given a trained QGNN $f_{\theta}$ and an input graph $G = (V, E)$, QGraphLIME constructs multiple locally perturbed graph datasets $\{D_i\}_{i=1}^m$ from $G$, following the locality-driven perturbation strategy of LIME. For graphs, perturbations are generated via random node removals and random-walk-based modifications to preserve local structure. Each perturbation removes $r$ nodes, producing $p$ graph variants represented as rows of a binary encoding matrix $Z$. To capture quantum randomness, each perturbed graph is evaluated $s$ times, corresponding to the number of quantum measurement shots, generating stochastic graph-level predictions. An ensemble of interpretable surrogate models $\Xi = \{S_i\}_{i=1}^m$, inspired by Q-LIME, is then fit on the pairs $(D_i, f_\theta(D_i))$, approximating local decision boundaries $B_i$. The dispersion among these boundaries, quantified through graph-structure-aware summary statistics, reflects uncertainty in the explanations.

For interpretability, QGraphLIME employs HSIC-based surrogates, specifically the HSIC-L1 Lasso~(\eqc{l1}) and HSIC Group Lasso~(\eqc{grp}), which leverage the Hilbert-Schmidt Independence Criterion to capture nonlinear dependencies between graph features and model outputs. This choice follows the design philosophy of GraphLIME~\citep{glime} and StGraphLIME~\citep{stglime}, offering enhanced sensitivity to graph structural complexity. We illustrate the full QGraphLIME pipeline in \picc{diagram} and detail each step in \algc{qgraphlime}.

To quantify the ensemble-level importance and uncertainty of graph elements (nodes or edges), we compute summary statistics over the surrogate attributions. Let \(\mathbf{s}^{(n)} \in \mathbb{R}^N\) denote the vector of importance scores assigned by the \(n\)-th surrogate in an ensemble of \(m\) surrogates to the \(N\) elements of a graph. We define the following metrics to aggregate and interpret surrogate outputs:

\paragraph{Top-\(k\) Inclusion Probability (TIP)} 
The Top-\(k\) Inclusion Probability measures the proportion of surrogate models in which an element appears among the top-\(k\) most important elements. Formally, for element \(i \in \{1, \dots, N\}\):
\[
\mathrm{TIP}_i = \frac{1}{m} \sum_{n=1}^{m} \mathbb{I}\Big[ i \in \mathrm{Top}\text{-}k\big(\mathbf{s}^{(n)}\big) \Big],
\]
where, \(\mathbb{I}[\cdot]\) is the indicator function and \(\mathrm{Top}\text{-}k(\mathbf{s}^{(n)})\) returns the indices of the \(k\) elements with the highest scores in surrogate \(n\). Higher TIP values indicate stronger consensus among surrogate models regarding the importance of that element.

\paragraph{Interquartile Range (IQR)} 
To assess variability in element importance across the surrogate ensemble, we compute the interquartile range for each element:
\[
\mathrm{IQR}_i = Q_3\left(\{s_i^{(n)}\}_{n=1}^m\right) - Q_1\left(\{s_i^{(n)}\}_{n=1}^m\right),
\]
where, \(Q_1\) and \(Q_3\) denote the first and third quartiles, respectively. Larger IQR values indicate greater disagreement among surrogates, reflecting higher uncertainty in the assigned importance. We additionally compute a $90\%$ confidence interval for each element by considering the distribution of scores across the surrogate ensemble.

\paragraph{Flip Probabilities under Element Removal} 
To provide a causal, intervention-based measure of importance, we define the flip probability for element \(i\) as the fraction of trials in which the model's predicted label changes when that element is removed. Let \(f_\theta(\cG)\) denote the QGNN's prediction for graph \(\cG\), and let \(\cG_{\setminus i}\) represent the graph with element \(i\) removed. Over \(T\) repeated evaluations capturing quantum measurement stochasticity, the flip probability is
\[
\mathrm{Flip}_i = \frac{1}{T} \sum_{t=1}^{T} \mathbb{I} \big[ f_\theta(\cG_{\setminus i}^{(t)}) \neq f_\theta(\cG) \big].
\]
This metric complements TIP and IQR by assessing the causal influence of individual graph elements on the model's predictions, providing a rigorous, intervention-based evaluation of explanatory fidelity.

Building on these ensemble-level metrics, we evaluate the contribution of each design choice to QGraphLIME's performance through comprehensive ablation studies. Specifically, we examine the effects of the type of graph perturbations (random vs. random-walk), and the choice of surrogate models (HSIC-L1 vs. HSIC Group Lasso). We also compare against a linear Logistic Regression surrogate to assess the effect of surrogate nonlinearity on the QGLIME's performance. For each configuration, we measure explanatory validity using the metrics described above. In addition, we assess explanatory validity through causal keep/remove interventions, quantified by $\mathrm{Fidelity}^{+}$ and $\mathrm{Fidelity}^{-}$, and evaluate the \emph{sparsity} of the resulting explanations. This analysis allows us to isolate the impact of each component on the stability, accuracy, and interpretability of the resulting node or edge rankings.

In the next subsection, we provide a theoretical analysis of QGraphLIME, establishing bounds on the number of surrogate models needed to reliably capture quantum measurement stochasticity while maintaining practical efficiency.

\subsection{Theoretical Analysis}

A key consideration in QGraphLIME is choosing the number of surrogate models to reliably capture the inherent measurement stochasticity of QGNNs, while maintaining practical efficiency. Unlike classical neural networks, whose outputs are deterministic, quantum measurements introduce unavoidable randomness that must be sufficiently sampled to produce faithful explanations. Each surrogate model represents a sample from the distribution of quantum measurement outcomes (i.e., predicted class probabilities for a graph), and the ensemble forms an empirical approximation of this distribution. From a practical standpoint, limiting the number of surrogates is important for performance, particularly on NISQ hardware due to low qubit counts and noise, and in classical statevector simulations due to the cost at high qubit counts and many circuit evaluations, which constrain computational resources. We leverage the Dvoretzky-Kiefer-Wolfowitz (DKW) inequality~\citep{dakineq, massart1990} to establish a principled, distribution-free bound on the minimum surrogate count required to approximate the quantum measurement distribution within a desired accuracy ($\epsilon$) and confidence ($\delta$), providing a practical criterion for selecting the ensemble size that balances explanatory fidelity with computational efficiency.

\paragraph{Assumptions.}
We fix a graph, \(\cG\), and a scalar explanation statistic, \(\mathcal{T}(S)\), computed from an independently generated surrogate \(S\) (e.g., a class probability for \(\cG\) or an aggregated node/edge attribution), under a fixed ensemble-generation recipe, that resamples perturbations and measurement seeds and fits the surrogate with fixed hyperparameters. We assume \(\{\mathcal{T}(S_i)\}_{i=1}^m\) are i.i.d.\ draws from a distribution with CDF \(\cP\). This models each surrogate as a sample from the stochastic pipeline induced by quantum measurements.

\begin{theorem}[Minimum surrogate count]
\label{thm:main}
Let \(\cP\) denote the true CDF of the scalar statistic \(\mathcal{T}(S)\) induced by the ensemble-generation randomness, and let \(\hat{\cP}_m\) be the empirical CDF from \(m\) i.i.d.\ surrogates \(S_1, \dots, S_m\). For any \(\epsilon > 0\) and \(\delta\in(0, 1)\), if
\[
m \;\ge\; \frac{1}{2\,\epsilon^2}\,\ln\!\Big(\frac{2}{\delta}\Big),
\]
then, with probability at least \(1 - \delta\),
\[
\sup\limits_{t\in\R} \big|\hat{\cP}_m(t) - \cP(t)\big| \;\le\; \epsilon,
\]
so, the surrogate ensemble captures the distribution of \(\mathcal{T}(S)\) within tolerance \(\epsilon\) at confidence \(1 - \delta\).
\end{theorem}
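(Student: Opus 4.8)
The plan is to recognize the statement as a direct consequence of the Dvoretzky--Kiefer--Wolfowitz inequality in its tight (Massart) form, so that the proof reduces to instantiating the optimal constant and solving for $m$. First I would invoke the standing i.i.d.\ assumption: $\{\mathcal{T}(S_i)\}_{i=1}^m$ are independent draws from the common CDF $\cP$, which is exactly the hypothesis under which DKW applies. Then I would state the inequality for the empirical CDF $\hat{\cP}_m$ built from these $m$ samples,
\[
\Pr\!\Big(\sup_{t\in\R}\big|\hat{\cP}_m(t)-\cP(t)\big| > \epsilon\Big) \;\le\; 2\,e^{-2m\epsilon^2},
\]
valid for every $\epsilon>0$ with a constant that does not depend on the underlying distribution, which is what makes the guarantee distribution-free.

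Next I would translate the accuracy/confidence target into a constraint on $m$ by forcing the tail bound below $\delta$. Setting $2e^{-2m\epsilon^2}\le\delta$ and taking logarithms gives $2m\epsilon^2 \ge \ln(2/\delta)$, i.e.\ $m \ge \tfrac{1}{2\epsilon^2}\ln(2/\delta)$, which is precisely the stated threshold. Passing to the complementary event then yields $\sup_{t\in\R}|\hat{\cP}_m(t)-\cP(t)| \le \epsilon$ with probability at least $1-\delta$, completing the argument. Note that the same $m$ works uniformly over all $t\in\R$ because DKW controls the supremum deviation directly, rather than a pointwise gap.

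Because the estimate is essentially a one-line consequence of DKW, there is no genuine analytic obstacle in the derivation itself; the substantive work lies in justifying that DKW is applicable, i.e.\ that the surrogate statistics really are i.i.d.\ draws from a single fixed CDF. The hard part will therefore be defending this modeling assumption rather than the inequality: each surrogate is produced by resampling perturbations and measurement seeds and refitting with fixed hyperparameters, so I would argue that, conditioning on the fixed graph $\cG$ and the fixed ensemble-generation recipe, the draws are independent and identically distributed, and that the map from pipeline randomness to the scalar $\mathcal{T}(S)$ is measurable so that a well-defined CDF $\cP$ exists. Finally, I would remark that the simultaneous multi-graph/multi-statistic extension follows immediately by a union bound over the finitely many target statistics, tightening the per-statistic confidence from $\delta$ to $\delta$ divided by the number of statistics and correspondingly inflating $m$ by a logarithmic factor.
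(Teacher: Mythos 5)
Your proof is correct and follows essentially the same route as the paper: invoking the DKW inequality with Massart's sharp constant, setting $2e^{-2m\epsilon^2}\le\delta$, and solving for $m$. Your additional remarks on the i.i.d.\ modeling assumption and the union-bound extension also match the paper's stated assumptions and its corollary, so there is nothing substantively different to flag.
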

\begin{proof}
By the DKW inequality with Massart's sharp constant, \(\Pr\!\big(\sup\limits_{t} |\hat{\cP}_m(t) - \cP(t)| > \epsilon\big) \le 2 e^{-2 m \epsilon^2}\). Setting \(2 e^{-2 m \epsilon^2} = \delta\) and solving for \(m\) yields the stated bound, which is distribution-free and holds for all \(\epsilon>0\) \citep{dakineq,massart1990}.
\end{proof}

\begin{corollary}[Simultaneous guarantee across graphs/statistics]
For \(n\) graphs and \(K\) scalar statistics per graph (e.g., class-wise scores or node/edge attribution aggregates), requiring uniform \(\epsilon\)-accuracy simultaneously for all \(nK\) CDFs is achieved if
\[
    m \;\ge\; \frac{1}{2\,\epsilon^2}\,\ln\!\Big(\frac{2 n K}{\delta}\Big),
\]
by Boole's inequality (union bound) applied to the \(nK\) DKW events.
\end{corollary}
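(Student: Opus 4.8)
The plan is to reduce the simultaneous guarantee to $nK$ separate applications of \thmc{main}, glued together by Boole's inequality. First I would index the $nK$ (graph, statistic) pairs by $j \in \{1, \dots, nK\}$, writing $\cP^{(j)}$ for the true CDF of the $j$-th scalar statistic and $\hat{\cP}_m^{(j)}$ for its empirical counterpart built from the shared ensemble of $m$ surrogates. The carry-over assumption is that, for each fixed $j$, the draws $\{\mathcal{T}(S_i)\}_{i=1}^m$ satisfy the i.i.d.\ hypothesis of \thmc{main}, so that the DKW--Massart tail bound applies marginally to each pair.

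Next I would introduce the per-pair ``bad events''
\[
A_j = \Big\{ \sup_{t \in \R} \big| \hat{\cP}_m^{(j)}(t) - \cP^{(j)}(t) \big| > \epsilon \Big\},
\]
and observe that the desired simultaneous conclusion is exactly the complementary event $\bigcap_{j} A_j^{c}$, i.e.\ every one of the $nK$ CDFs is uniformly within $\epsilon$ of its target. By the DKW inequality with Massart's constant (as used in the proof of \thmc{main}), $\Pr(A_j) \le 2 e^{-2 m \epsilon^2}$ for each $j$. Applying Boole's inequality to the union then gives $\Pr\big(\bigcup_{j=1}^{nK} A_j\big) \le \sum_{j=1}^{nK} \Pr(A_j) \le 2 n K\, e^{-2 m \epsilon^2}$. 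Requiring the right-hand side to be at most $\delta$ and solving $2 n K\, e^{-2 m \epsilon^2} \le \delta$ for $m$ yields $m \ge \frac{1}{2\epsilon^2} \ln\!\big(\frac{2nK}{\delta}\big)$, which is the claimed threshold; the simultaneous event then holds with probability at least $1 - \delta$.

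I do not expect a genuinely hard step: the argument is a standard union bound layered on \thmc{main}. The one subtlety worth flagging is that all $nK$ empirical CDFs are computed from a single shared ensemble, so the events $A_j$ are in general statistically dependent. This is precisely where Boole's inequality earns its keep — it requires no independence across the $A_j$, only the marginal DKW tail for each — so the shared-ensemble coupling costs nothing in the final bound, at the mild price of the $nK$ factor inside the logarithm. The only thing I would make explicit is that the per-pair i.i.d.\ condition of \thmc{main} is assumed to hold for every one of the $nK$ statistics under the fixed ensemble-generation recipe; granting that, the conclusion is immediate.
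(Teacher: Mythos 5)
Your proposal is correct and follows exactly the route the paper intends: apply the DKW--Massart tail bound marginally to each of the \(nK\) (graph, statistic) pairs, combine the bad events via Boole's inequality, and solve \(2nK\,e^{-2m\epsilon^2}\le\delta\) for \(m\). Your explicit remark that the union bound requires no independence across the shared-ensemble events is a worthwhile clarification, but it does not constitute a different argument.
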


\paragraph{Discussion.}
In our setting, the statistic \(\mathcal{T}(S)\) is the binary class probability in \([0,1]\), so the supremum in the DKW deviation can be taken over \([0,1]\). When \(\mathcal{T}\) is continuous, this follows from the probability integral transform, which maps \(\mathcal{T}\) to \(\mathrm{Uniform}[0, 1]\) and makes suprema over \([0,1]\) and \(\R\) equivalent, which provides a convenient formulation for probability-valued outputs. Even when the probability estimate is effectively discrete, e.g., due to a finite number of measurement shots or discrete perturbations, the DKW inequality applies verbatim with the supremum over \(\R\) and the same sharp two-sided constant, so the stated bound on \(m\) remains valid. The guarantee presumes that the surrogate-level statistics \(\{\mathcal{T}(S_i)\}_{i=1}^m\) are i.i.d.\ under a fixed ensemble-generation protocol (fixed circuit, resampled perturbations and measurement seeds, fixed training hyperparameters), ensuring that \(\hat{P}_m\) is the empirical CDF of i.i.d.\ draws. In the presence of correlated noise, hardware drift, or shared randomness across surrogates, the effective sample size can be reduced, making the bound conservative in practice while preserving its distribution-free form for independent approximations or after accounting for dependence.

\section{Experiments}

\subsection{Datasets}

To facilitate rigorous evaluation of explanation quality, we employ synthetically generated graph datasets with clearly defined ground-truth targets. Graphs are drawn from canonical structures, including wheels, cycles, two-connected wheels, two-connected cycles, and heterogeneous combinations, with controlled variations relevant to the classification tasks. To prevent overfitting to fixed node identities, hub node positions are randomized through relabeling, ensuring that neither the model nor the explainer can exploit a fixed node index. Each graph contains designated reference nodes, such as hub nodes in wheel graphs, that serve as ground-truth targets and visual anchors for qualitative assessment.

In all experiments, we train an EQGC model for binary graph classification and apply Quantum GraphLIME to explain its predictions. We focus on two primary tasks, following the experimental setup of StGraphLIME~\citep{stglime}:

\begin{itemize}
    \item \textbf{Case 1:} Distinguishing cycles from wheels by identifying hub nodes.
    \item \textbf{Case 2:} Detecting the presence of disconnected hub nodes in two-connected graphs.
\end{itemize}

The datasets are structured as follows. Case 1 comprises 100 training graphs (50 wheels and 50 cycles) and 40 test wheel graphs, while Case 2 includes 180 training graphs (120 two-wheel and 60 two-cycle) and 40 test two-wheel graphs. Graph sizes are constrained by the maximum number of qubits that classical statevector simulations on our experimental setup can accommodate. Specifically, Case 1 graphs contain up to 13 nodes and Case 2 graphs up to 16 nodes. These controlled datasets provide sufficient structural variability while preserving interpretable targets, enabling quantitative and qualitative evaluation of explanations. Our code is available at \href{https://github.com/smlab-niser/qglime}{https://github.com/smlab-niser/qglime}.

\subsection{Model Training}

We instantiate the EQGC model with two quantum layers, allocating one qubit per node, and employ a hidden-layer MLP with 32 units to aggregate node-level embeddings. The model produces a single class probability as a graph-level output, and $2000$ measurement shots are used during training to capture quantum stochasticity. We implement the model using PyTorch\footnote{\url{https://pytorch.org/}} and PyTorch Geometric\footnote{\url{https://pytorch-geometric.readthedocs.io/en/2.6.1/}}, using the original code as reference\footnote{\url{https://github.com/pmernyei/eqgc-experiments/}}, while ensuring that the quantum measurement noise is faithfully simulated. Supervised training over 50 epochs for each case is conducted using binary cross-entropy loss between predicted and true labels, optimized via Adam~\citep{adam}. Node-level quantum probabilities are classically aggregated to generate graph-level predictions. Validation is performed throughout training to monitor convergence and prevent overfitting, enabling robust generalization to unseen graphs.

\subsection{Explanation Methodology}

QGraphLIME is applied post hoc to elucidate QGNN predictions. Following the locality-driven perturbation strategy of LIME, each input graph is perturbed through random node removals and random-walk-based modifications. Perturbed instances are evaluated $s$ times using the trained QGNN to capture measurement-induced stochasticity. HSIC-based surrogate models, including HSIC-L1 Lasso and HSIC Group Lasso, are fit on these perturbed graphs to approximate local decision boundaries. An ensemble of surrogate models enables aggregation of the resulting importance scores, with Top-\(k\) Inclusion Probability (TIP) capturing consensus across surrogates, interquartile range (IQR) quantifying uncertainty, and flip probabilities providing a causal, intervention-based assessment of influence, thereby offering a comprehensive measure of confidence for node- or edge-level explanations.

In addition to these per-input-graph metrics, which summarize surrogate-level variability and uncertainty, we assess global explanation performance by aggregating results across the dataset. This includes standard measures of fidelity, sparsity, and confidence, enabling a holistic evaluation of QGraphLIME's ability to provide accurate, stable, and interpretable explanations across diverse graph instances.

\subsection{Evaluation Metrics}

We assess explanatory performance using multiple complementary metrics that are standard in classical GNN explanation literature~\citep{gnnexp}.

\paragraph{Top-k Accuracy} This metric quantifies the fraction of test instances for which the ground-truth target nodes are included among the top-$k$ nodes ranked by importance, by the surrogate model
\[
\mathrm{Acc}_{\mathrm{top-}k} = \frac{1}{N} \sum_{i=1}^N \mathbb{I}\left[\mathrm{target}_i \in \mathrm{Top}\text{-}k(\mathbf{\alpha}_i)\right],
\]
where, $\alpha_i$ denotes the node importance scores for instance $i$, and $\mathbb{I}[\cdot]$ is the indicator function. Higher values indicate that the explainer accurately identifies the most influential nodes. For Case 1 (single-target tasks), \emph{One@1} and \emph{One@3} accuracy measure the percentage of instances where the ground-truth target node appears among the highest-ranked or top three nodes by importance, respectively. In Case 2 (dual-target tasks), \emph{Both@2} and \emph{Both@6} indicate the proportion of samples where both target nodes are included within the top 2 or top 6 ranked nodes, while \emph{One@2} and \emph{One@6} reflect the fraction of cases where at least one of the two target nodes is present in these respective subsets. These variants offer a more nuanced evaluation of the explainer's ability to reliably highlight all relevant nodes or at least partially recover important targets in settings with multiple correct answers.

\paragraph{Fidelity} Fidelity measures the extent to which the surrogate explanation aligns with the QGNN's predictions under targeted interventions. In particular, \textit{Fidelity-plus} ($\fidp$) evaluates agreement when only the top-$k$ nodes are retained and all other nodes are removed, whereas \textit{Fidelity-minus} ($\fidm$) evaluates agreement when the top-$k$ nodes are removed and all remaining nodes are kept. A good explainer is characterized by a \emph{high} $\fidp$ and a \emph{low} $\fidm$, indicating that it correctly identifies nodes that are critical to the model's decision-making.

\paragraph{Sparsity} This metric assesses the conciseness of the explanation by computing the fraction of nodes with negligible importance:
\[
    S = 1 - \frac{|\{i : s_i \ge 0.1 \max_j s_j\}|}{N}.
\]
Higher sparsity reflects explanations that focus on a smaller subset of influential nodes, enhancing interpretability.

\paragraph{Confidence Metrics} 
These metrics evaluate the stability and distinctiveness of explanations across multiple surrogate instances. Let $\mathbf{s}^{(n)} \in \mathbb{R}^N$ denote the importance scores assigned by surrogate model $n$ to the $N$ nodes of a graph, and let $\mathcal{T} \subseteq \{1,\dots,N\}$ be the set of target nodes with ground-truth importance. Define the mean score of node $v$ across the ensemble as 
\[
\mu_v = \frac{1}{m} \sum_{n=1}^{m} s_v^{(n)},
\]
where, $m$ is the number of surrogate models.

\begin{itemize}
    \item \textbf{Consensus} ($\cons$) measures the fraction of surrogate models in which each target node appears among the top-$k$ nodes. Formally, for node $v \in \mathcal{T}$:
    \[
        \cons_v = \frac{1}{m} \sum_{n=1}^{m} \mathbb{I}\Big[v \in \mathrm{Top}\text{-}k(\mathbf{s}^{(n)})\Big],
    \]
    and the overall consensus across all target nodes is 
    \[
        \cons = \frac{1}{|\mathcal{T}|} \sum_{v \in \mathcal{T}} \cons_v.
    \]
    Higher values indicate stronger agreement among surrogates regarding the importance of target nodes.

    \item \textbf{Relative Importance} ($\ri$) quantifies how strongly target nodes are distinguished from non-target nodes. Let $\mathcal{N} = \{1,\dots,N\} \setminus \mathcal{T}$ denote the set of non-target nodes. Then,
    \[
        \ri = \frac{\frac{1}{|\mathcal{T}|} \sum_{v \in \mathcal{T}} \mu_v}{\frac{1}{|\mathcal{N}|} \sum_{u \in \mathcal{N}} \mu_u},
    \]
    Larger values indicate that target nodes are clearly more important than non-target nodes across the surrogate ensemble.
\end{itemize}

\noindent Collectively, these ensemble and per-graph metrics provide a comprehensive assessment of explanations, capturing accuracy, interpretability, and robustness across the dataset, while explicitly accounting for variability induced by the measurement stochasticity inherent in QGNNs.

\subsection{Ablation Studies}
To rigorously assess the contribution of each design choice, we perform comprehensive ablation studies. We vary the perturbation type (random vs. random-walk), surrogate model choice (HSIC-L1 vs. HSIC Group Lasso), and surrogate linearity (linear logistic regression vs. nonlinear HSIC-based models). Additionally, we examine the effect of repeated measurements, comparing StGraphLIME (single-shot) against QGraphLIME (multi-shot). For each configuration, explanatory validity is evaluated using Top-k Accuracy, Fidelity, and Sparsity metrics, allowing us to isolate the impact of each component on stability, accuracy, and interpretability of the resulting node or edge rankings. \tabc{ablations} presents a table of all evaluated design combinations, summarizing the perturbation strategies, surrogate model variants, and measurement regimes considered in our ablation study, while \secc{ablations} provides a detailed analysis of the results, highlighting the individual and combined effects of these design choices.

\subsection{Implementation Details}
All experiments were conducted on a system with a 24-core AMD CPU and an NVIDIA A100 GPU with 80~GiB VRAM. Graph perturbations, surrogate model fitting, and ensemble aggregation were implemented in Python using standard scientific libraries. Hyperparameters, including the number of perturbations, measurement shots, and surrogate regularization coefficients, were selected based on preliminary cross-validation to balance fidelity, sparsity, and computational efficiency.

\section{Results and Discussion}

We evaluate QuantumGraphLIME using previously discussed standard graph explanation metrics, as well as ensemble metrics, which together allow us to assess both the precision and stability of explanations for quantum graph neural networks.

\begin{figure}[ht]
    \centering
    \begin{subfigure}{0.30\textwidth}
        \includegraphics[width=\linewidth]{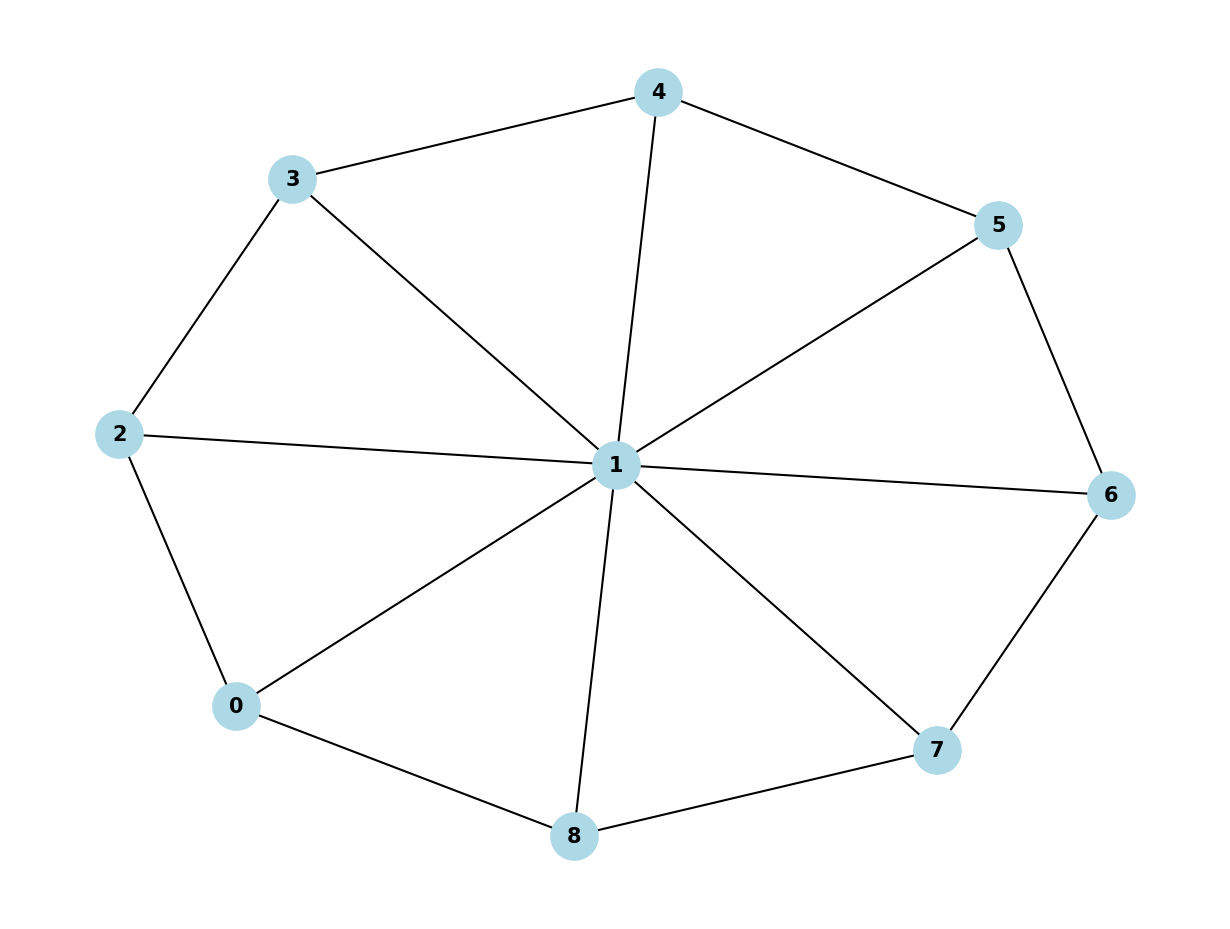}
    \end{subfigure}%
    \begin{subfigure}{0.33\textwidth}
        \includegraphics[width=\linewidth]{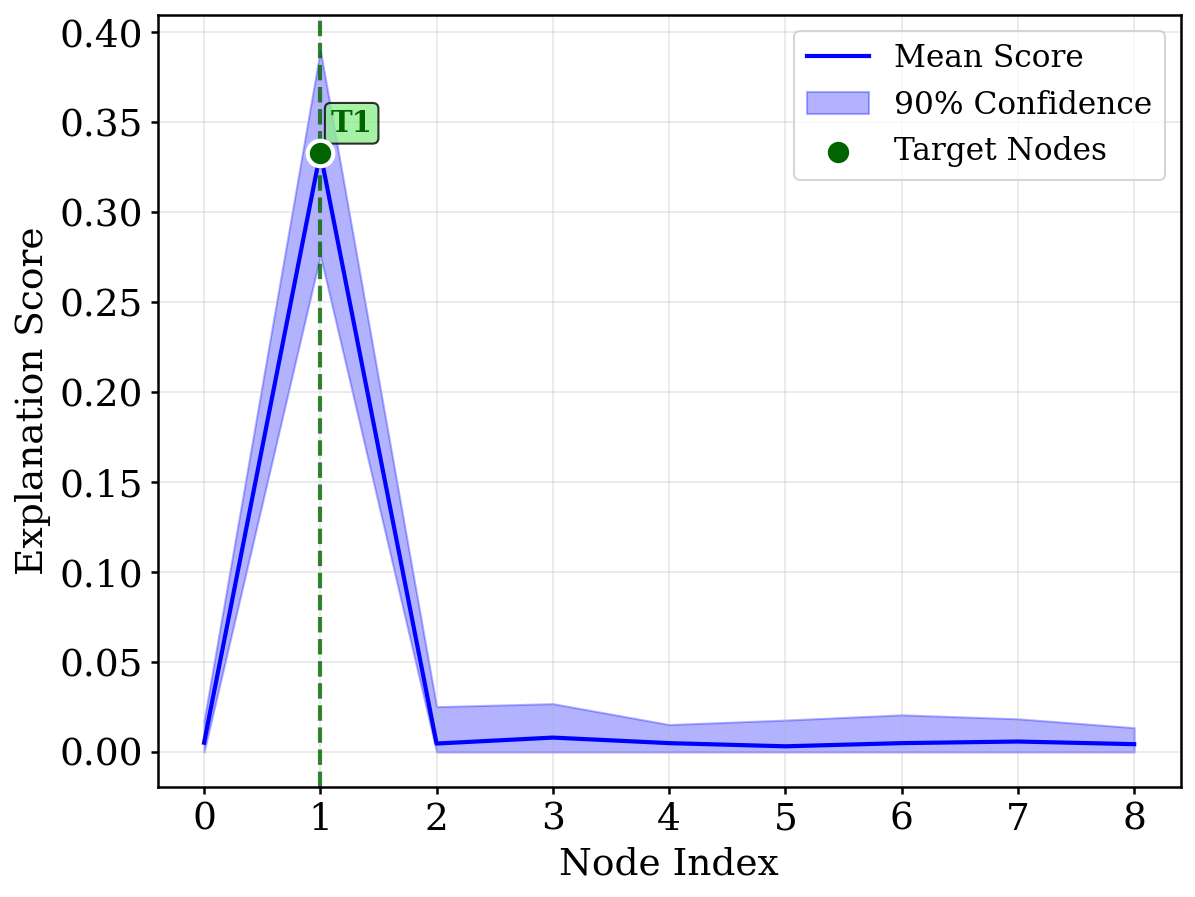}
    \end{subfigure}%
    \begin{subfigure}{0.33\textwidth}
        \includegraphics[width=\linewidth]{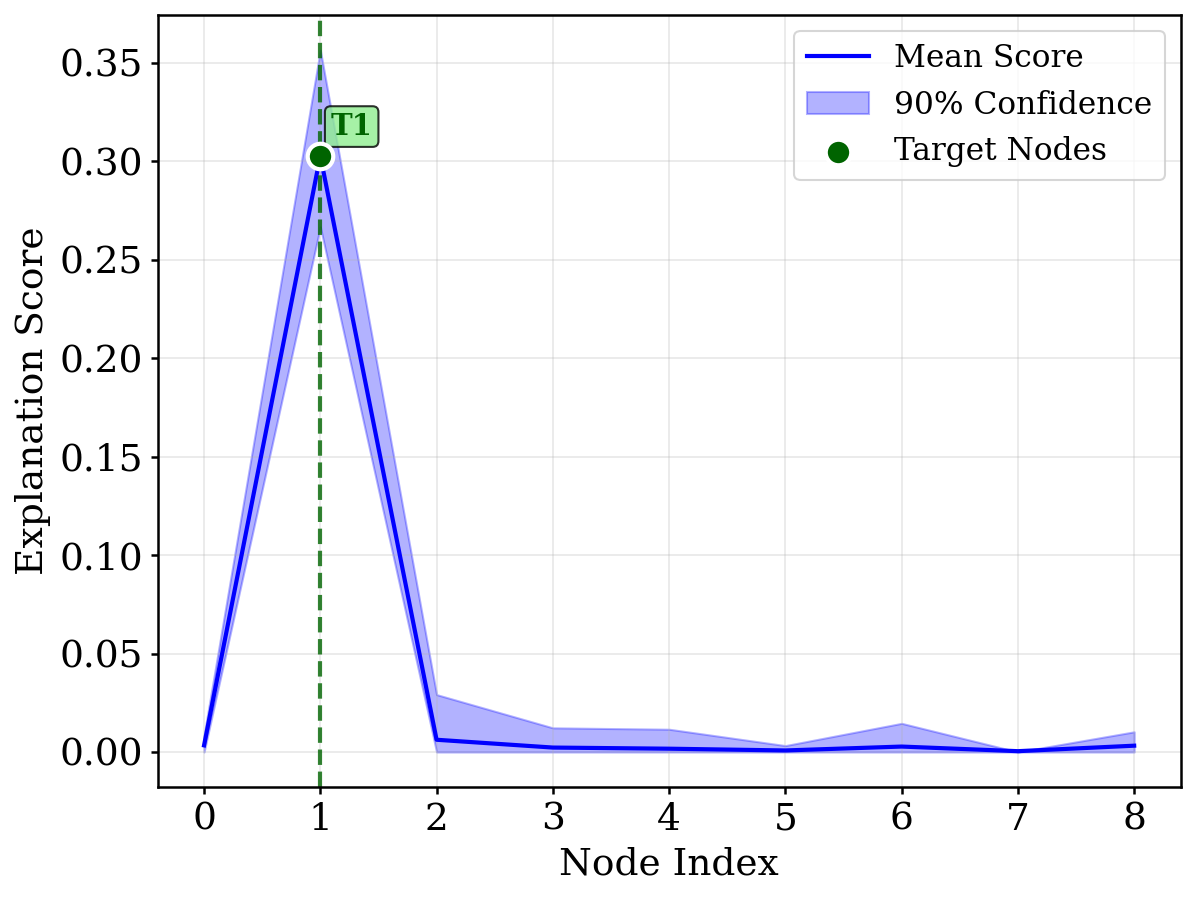}
    \end{subfigure}

    \vspace{0.4cm}

    \begin{subfigure}{0.30\textwidth}
        \includegraphics[width=\linewidth]{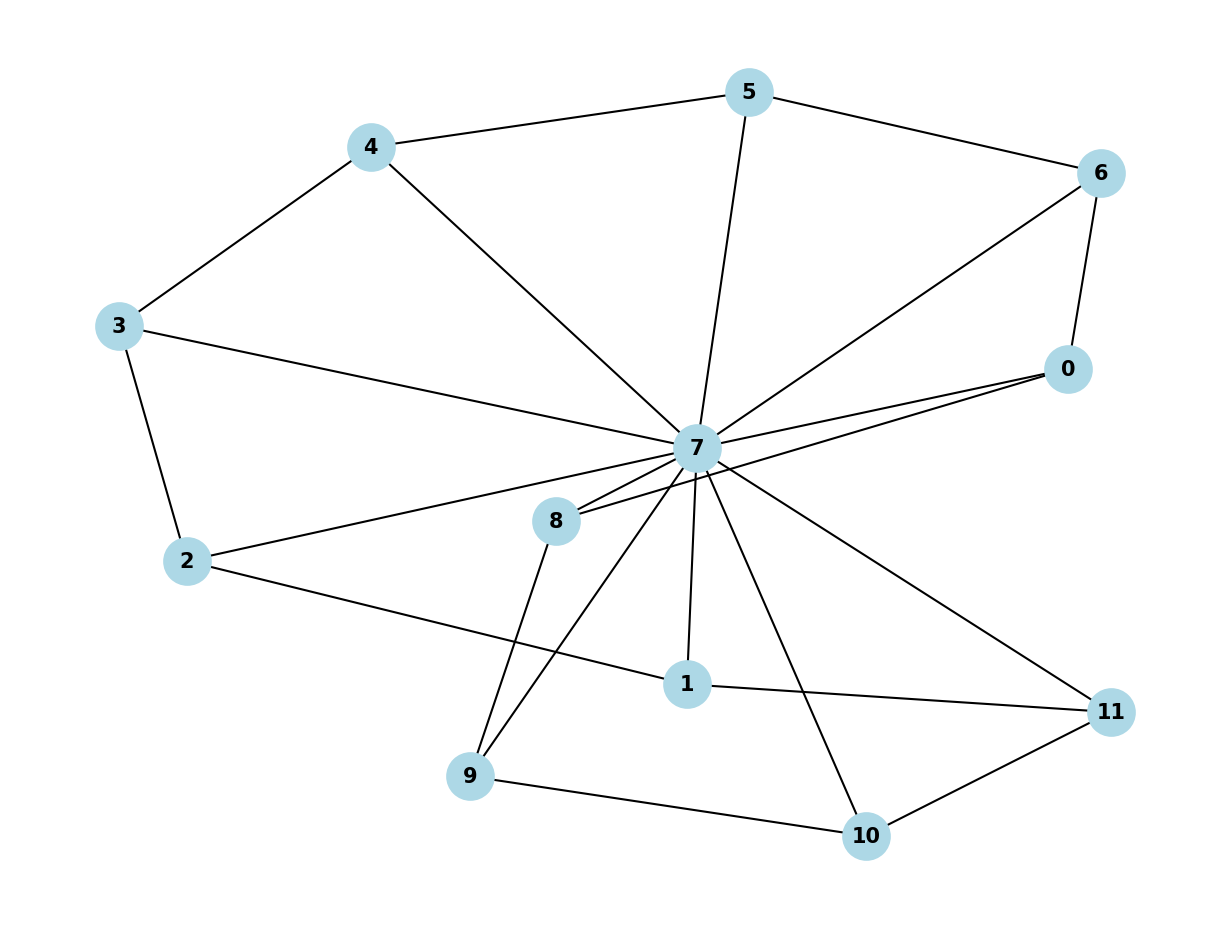}
    \end{subfigure}%
    \begin{subfigure}{0.33\textwidth}
        \includegraphics[width=\linewidth]{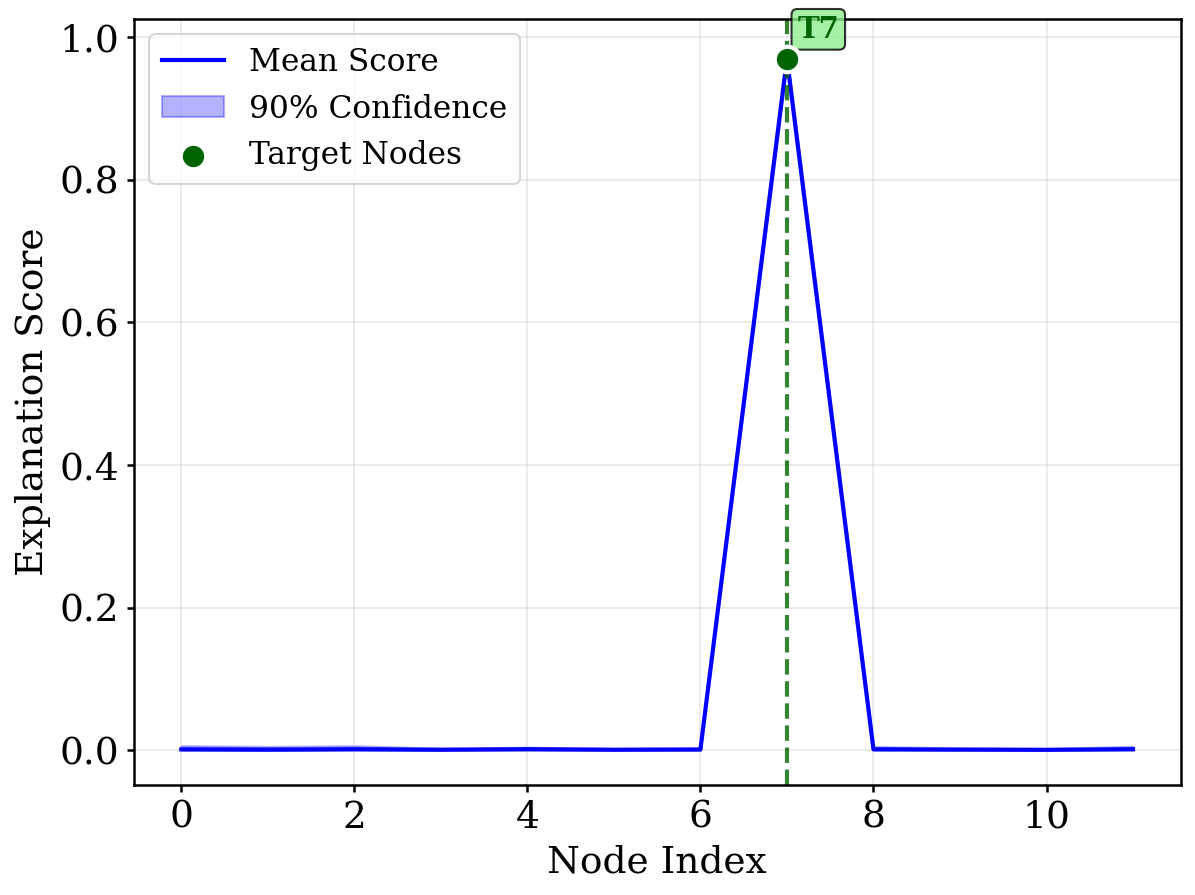}
    \end{subfigure}%
    \begin{subfigure}{0.33\textwidth}
        \includegraphics[width=\linewidth]{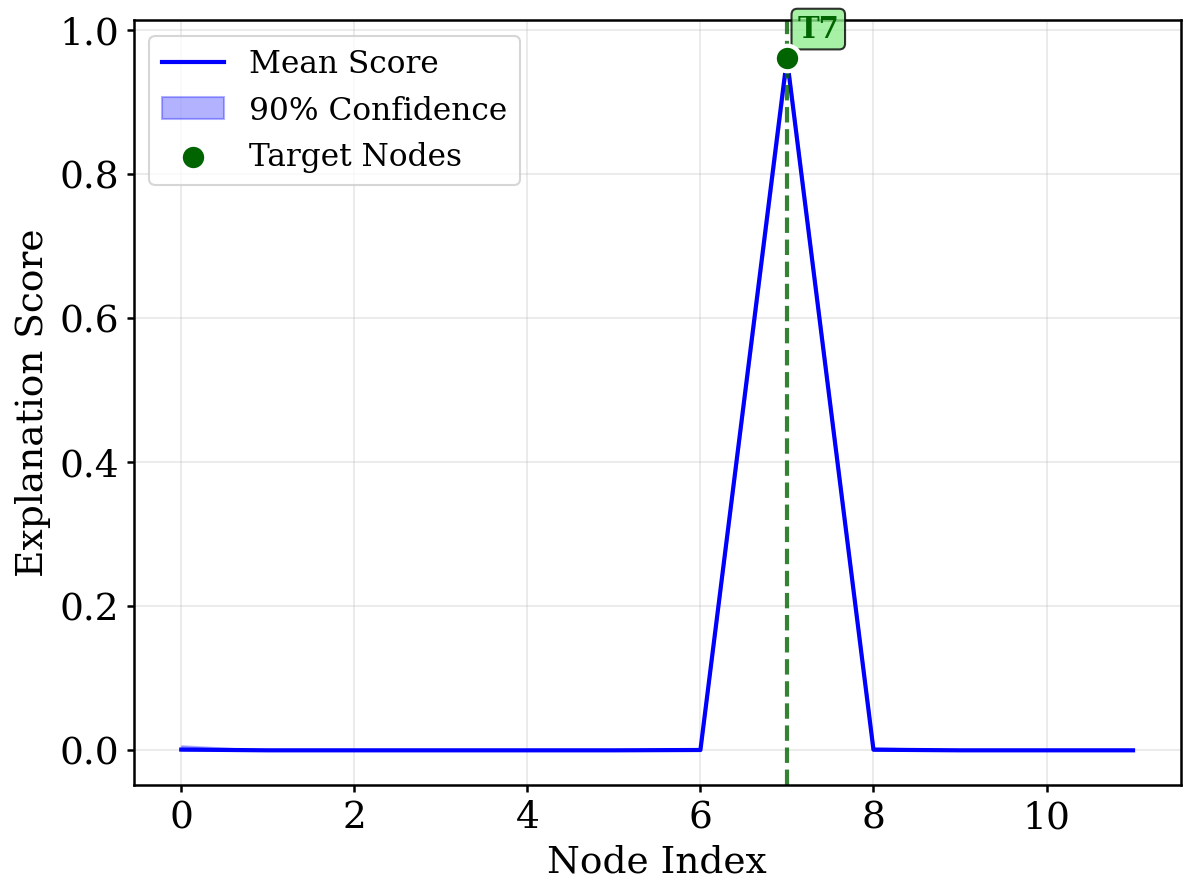}
    \end{subfigure}

    \caption{
        Case 1 - Single-Target: Explanation Score variation across surrogates per node due to quantum stochasticity. \bfa{Left} - Input Graph; \bfa{Center} - QGLIME-HSIC L1; \bfa{Right} - HSIC-G.
    }
    \label{fig:hsicsc1}
\end{figure}

\begin{figure}[ht]
    \centering
    \begin{subfigure}{0.30\textwidth}
        \includegraphics[width=\linewidth]{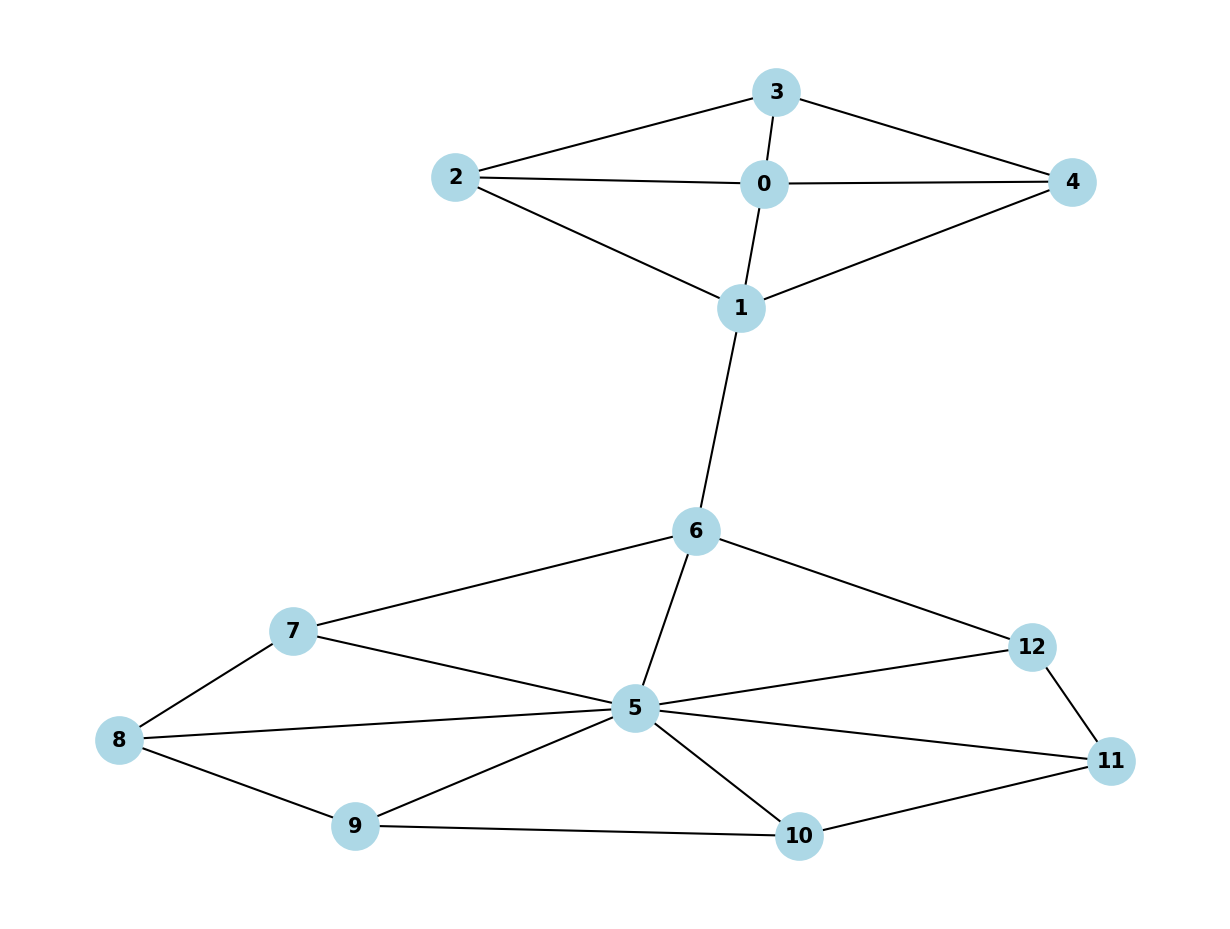}
    \end{subfigure}%
    \begin{subfigure}{0.33\textwidth}
        \includegraphics[width=\linewidth]{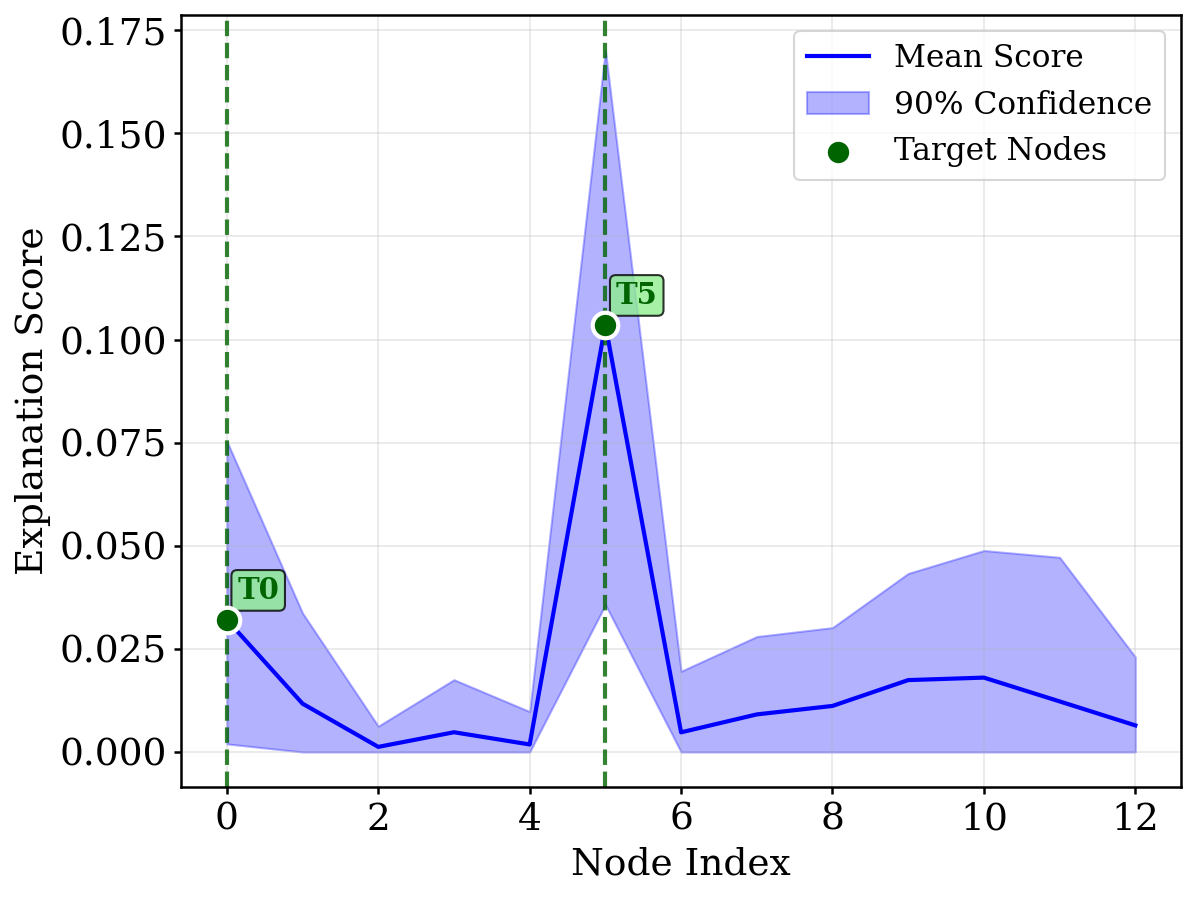}
    \end{subfigure}%
    \begin{subfigure}{0.33\textwidth}
        \includegraphics[width=\linewidth]{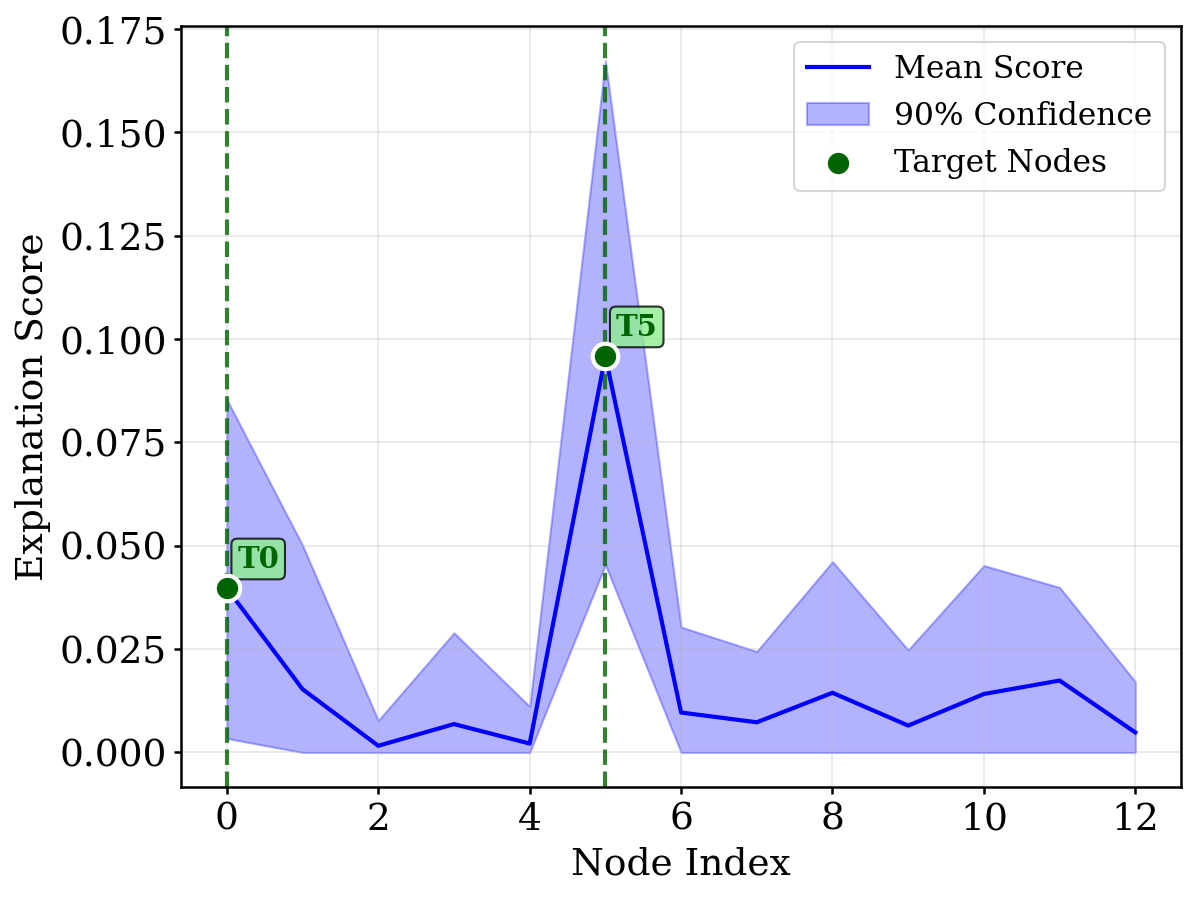}
    \end{subfigure}

    \vspace{0.4cm}

    \begin{subfigure}{0.30\textwidth}
        \includegraphics[width=\linewidth]{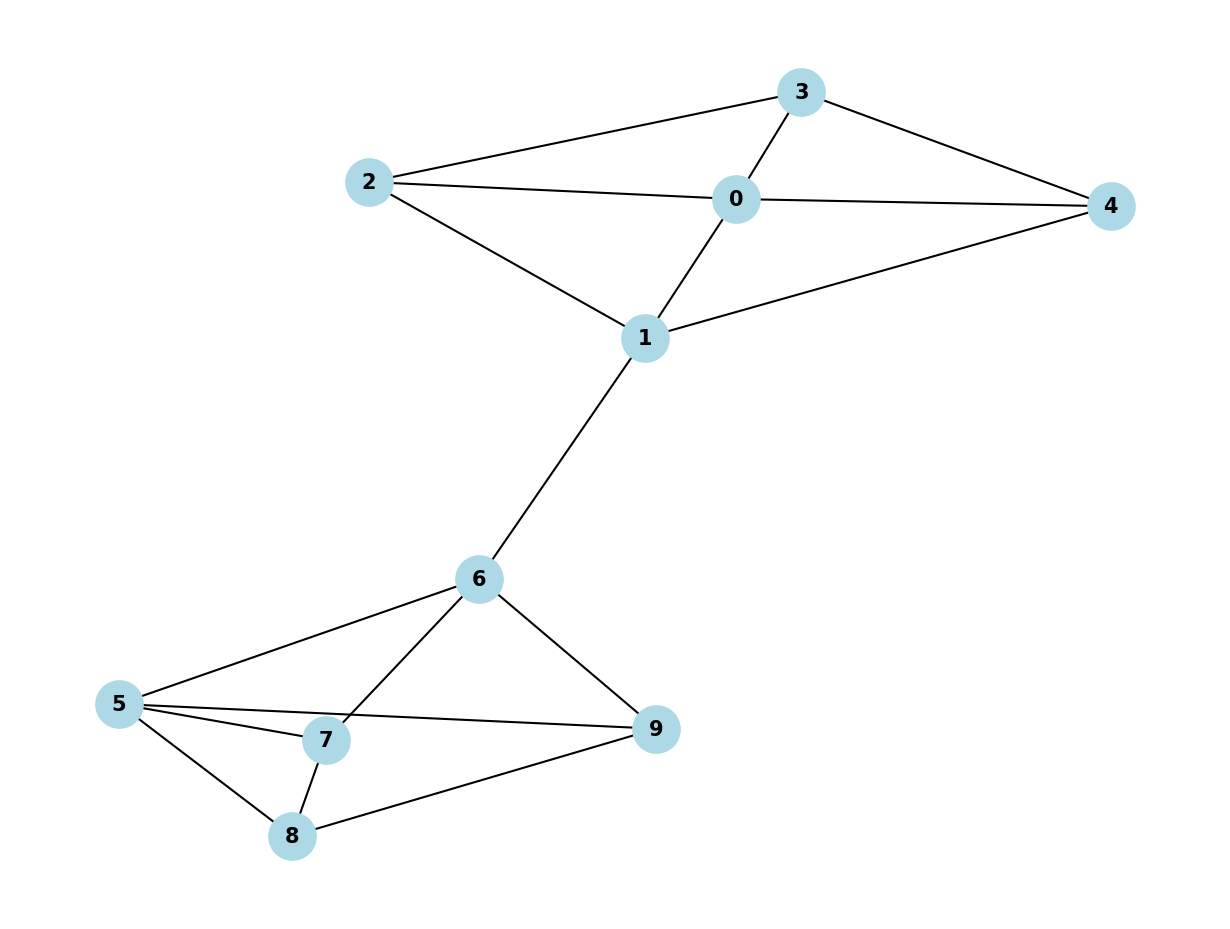}
    \end{subfigure}%
    \begin{subfigure}{0.33\textwidth}
        \includegraphics[width=\linewidth]{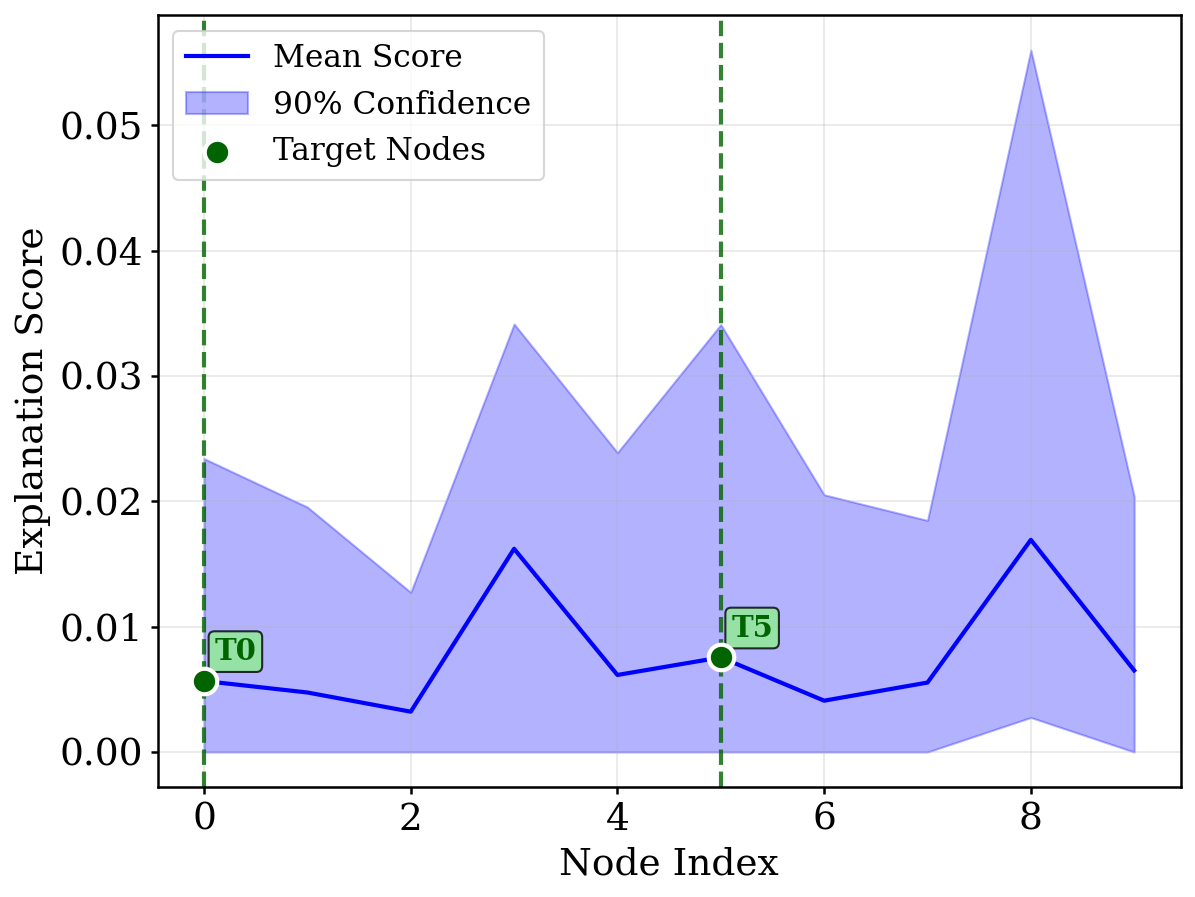}
    \end{subfigure}%
    \begin{subfigure}{0.33\textwidth}
        \includegraphics[width=\linewidth]{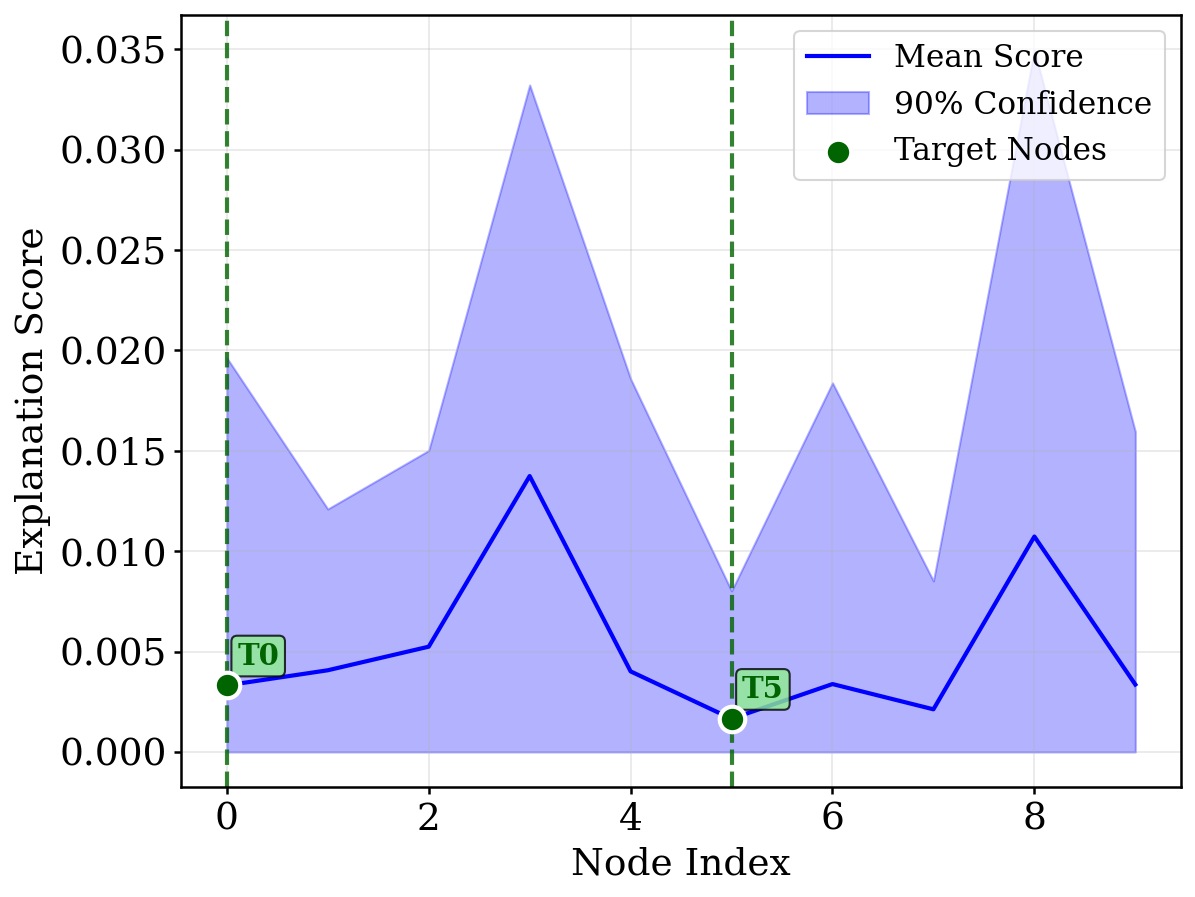}
    \end{subfigure}

    \caption{
        Case 2 - Dual-Target: Explanation Score variation across surrogates per node due to quantum stochasticity. \bfa{Left} - Input Graph; \bfa{Center} - QGLIME-HSIC L1; \bfa{Right} - HSIC-G.
    }
    \label{fig:hsicsc2}
\end{figure}

\begin{table}[ht]
    \centering
    \caption{Case 1 and Case 2: Explanation Accuracy for QGLIME variants vs. a Random baseline}
    \begin{tabular}{lcc|cccc}
        \toprule
        Method & \emph{One@1} & \emph{One@3} & \emph{Both@2} & \emph{Both@6} & \emph{One@2} & \emph{One@6} \\
        \midrule
        Random Explainer & $0.10 \pm 0.30$ & $0.30 \pm 0.46$ & $0.1 \pm 0.30$ & $0.3 \pm 0.46$ & $\sbest{0.4 \pm 0.49}$ & $0.8 \pm 0.40$ \\
        QGLIME (HSIC-L1) & $\best{1.00 \pm 0.00}$ & $\best{1.00 \pm 0.00}$ & $\best{0.9 \pm 0.30}$ & $\best{1.0 \pm 0.00}$ & $\best{0.9 \pm 0.30}$ & $\best{1.0 \pm 0.00}$ \\
        QGLIME (HSIC-G) & $\best{1.00 \pm 0.00}$ & $\best{1.00 \pm 0.00}$ & $\best{0.9 \pm 0.30}$ & $\sbest{0.9 \pm 0.30}$ & $\best{0.9 \pm 0.30}$ & $\sbest{0.9 \pm 0.30}$ \\
        \bottomrule
    \end{tabular}
    \label{tab:mainacc}
\end{table}

\begin{table}[ht]
\centering
\caption{Case 1 and Case 2: QGLIME Explainability Performance}
\begin{tabular}{lccccc}
    \toprule
    Method & $\fidm$ & $\fidp$ & $\spars$ & $\cons$ & $\ri$ \\
    \midrule
    \multicolumn{6}{l}{\textbf{Case 1: Single-Target}} \\
    \addlinespace[0.3em]
    QGLIME (HSIC-L1) & $\sbest{0.963 \pm 0.030}$ & $\sbest{0.844 \pm 0.256}$ & $\sbest{0.726 \pm 0.363}$ & $\sbest{1.0}$ & $\sbest{7.690}$ \\
    QGLIME (HSIC-G)  & $\best{0.975 \pm 0.030}$  & $\best{0.866 \pm 0.265}$  & $\best{0.746 \pm 0.327}$  & $\best{1.0}$ & $\best{8.456}$ \\
    \addlinespace[0.2em]
    \midrule
    \multicolumn{6}{l}{\textbf{Case 2: Dual-Target}} \\
    \addlinespace[0.3em]
    QGLIME (HSIC-L1) & $\best{0.001 \pm 0.000}$ & $\sbest{0.863 \pm 0.290}$ & $\best{0.544 \pm 0.231}$ & $\best{0.240}$ & $\best{0.834}$ \\
    QGLIME (HSIC-G)  & $\best{0.001 \pm 0.000}$  & $\best{0.873 \pm 0.285}$  & $\sbest{0.467 \pm 0.253}$ & $\sbest{0.180}$ & $\sbest{0.426}$ \\
    \bottomrule
\end{tabular}
\label{tab:mainmetric}
\end{table}

\subsection{QGLIME Performance Analysis}

Tables~\ref{tab:mainacc} and \ref{tab:mainmetric}, as well as Figures~\ref{fig:hsicsc1} and \ref{fig:hsicsc2}, summarize the evaluation of QGLIME across all test cases and surrogate configurations, providing empirical evidence for the patterns in accuracy, fidelity, sparsity, and confidence that we discuss below.

\paragraph{Single-Target (Hub) Detection.}  
In the single-target setting, QGLIME consistently produces highly reliable explanations. Across surrogates, consensus and relative importance metrics indicate that the framework robustly identifies the critical hub node in each graph instance, while sparsity metrics confirm that the explanations remain focused and concise. Fidelity measures further show that the surrogate models closely approximate the QGNN's response to interventions, suggesting that the explanations are not only accurate but also reflective of the underlying quantum decision process. Collectively, these results demonstrate that QGLIME effectively balances interpretability, fidelity, and reproducibility for simple graph structures.

\paragraph{Dual-Target (Hubs) Detection.}  
When extending to multiple targets, QGLIME continues to maintain high explanatory accuracy; however, metrics capturing ensemble uncertainty and reproducibility reveal emerging limitations. Consensus decreases and IQRs increase, indicating that surrogate models show greater variability in assigning importance when multiple nodes jointly influence predictions. The decline in sparsity reflects the need to distribute importance across a larger subset of nodes to preserve explanation validity. Notably, fidelity exhibits asymmetry under top-k retention versus top-k removal interventions, suggesting that interactions between multiple influential nodes introduce complex dependencies that are harder for local surrogates to capture. These observations illustrate that while QGLIME remains effective in multi-target scenarios, explanation confidence is naturally constrained by combinatorial complexity.

\paragraph{Discussion.} The comparison between single and dual-target scenarios highlights several key insights. First, ensemble-based surrogates provide valuable uncertainty quantification, revealing not only which nodes are influential, but also how stable these assessments are under perturbation and stochastic quantum measurements. Second, surrogate choice and perturbation strategy have a measurable impact on explanation stability, particularly in more complex multi-target settings. Finally, the observed trade-offs among sparsity, fidelity, and confidence indicate that achieving high accuracy for interacting targets may require less sparse explanations. Together, these findings demonstrate that QGLIME provides an uncertainty-aware framework for explaining QGNNs, effectively capturing the relational complexity inherent in graph-structured data as well as the intrinsic stochasticity of quantum measurements.

\subsection{Ablation Studies}
\label{sec:ablations}

To evaluate the contribution of each component, we conduct systematic ablation studies on QGraphLIME. \tabc{ablations} summarizes the variations considered in these experiments, which collectively allow us to disentangle the effects of perturbation, surrogate formulation, and quantum measurement regimes on overall explanatory quality. Specifically, we compare random node perturbations with random-walk-based perturbations to assess the sensitivity of explanations to structural locality. We further analyze the impact of surrogate model linearity by contrasting logistic regression with nonlinear HSIC-based surrogates (HSIC-L1 and HSIC Group Lasso), thereby isolating the effect of nonlinear dependencies on explanation fidelity. Further, the influence of the regularization strength ($\lambda$) within the HSIC framework is examined to determine its role in balancing sparsity and fidelity. Finally, we compare single-shot (StGraphLIME) and multi-shot (QGraphLIME) configurations to evaluate the benefits of repeated quantum measurements in stabilizing explanations under inherent stochasticity. Each configuration is evaluated using the previously defined metrics: Top-(k) Accuracy, Fidelity ($\fidp$ and ($\fidm$), and Sparsity, along with ensemble-based confidence metrics to quantify stability and interpretability.
\begin{table}[ht]
    \centering
    \caption{Ablation Studies: QuantumGraphLIME Components}
    \begin{tabular}{ll}
        \toprule
        \textbf{Component} & \textbf{Variants / Combinations} \\
        \midrule
        Perturbation Type & Random Node vs. Random Walk \\
        Surrogate Linearity & Linear (Logistic Regression) vs. Nonlinear (HSIC-based) \\
        Quantum Measurement & StGraphLIME (Single-shot) vs. QGraphLIME (Multi-shot) \\
        HSIC-Group Penalty, \(\lambda\) & \(\{1, 10^{-1}, 10^{-2}, 10^{-3}, 10^{-4}\}\) \\
        \bottomrule
    \end{tabular}
    \label{tab:ablations}
\end{table}

\subsubsection{Effect of Different Perturbation Strategies: Random Walk}

\begin{figure}[ht]
    \centering

    \begin{subfigure}{0.49\textwidth}
        \includegraphics[width=\linewidth]{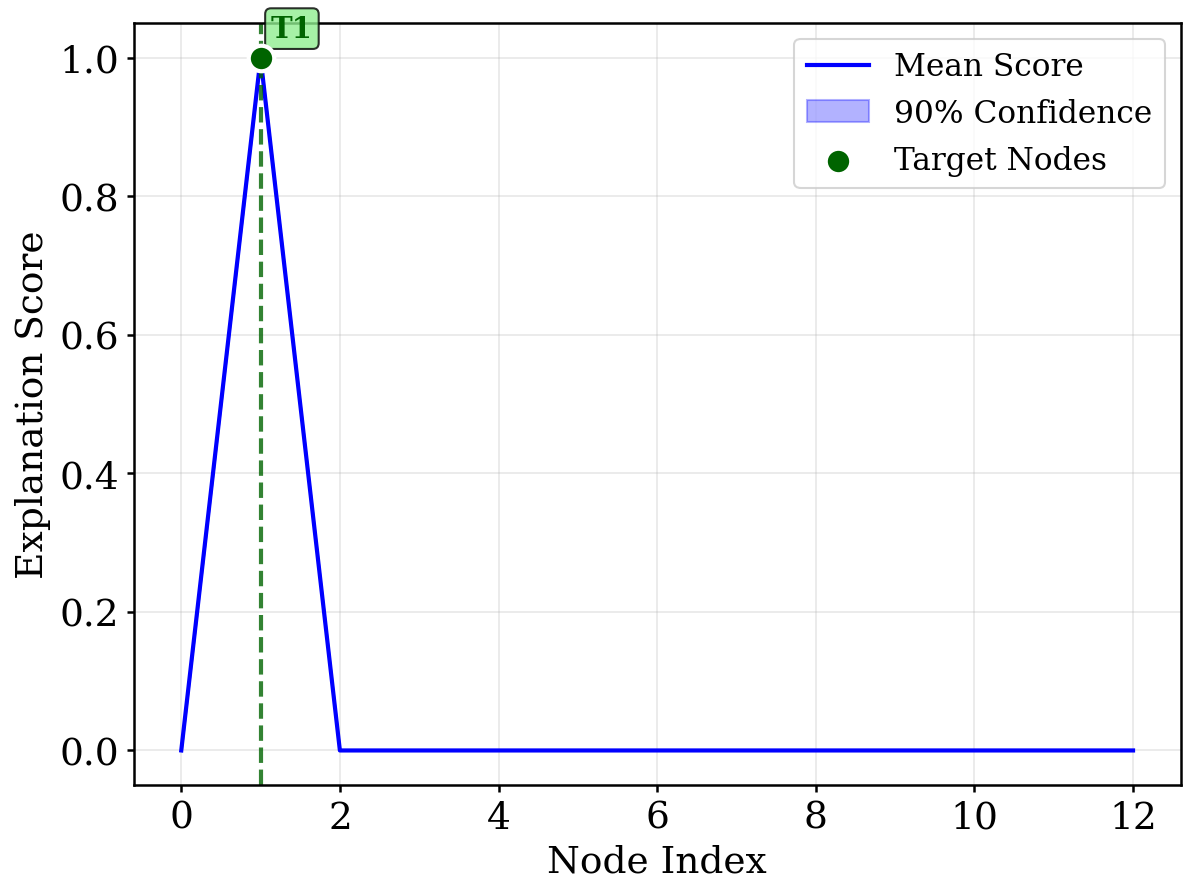}
    \end{subfigure}%
    \hfill
    \begin{subfigure}{0.49\textwidth}
        \includegraphics[width=\linewidth]{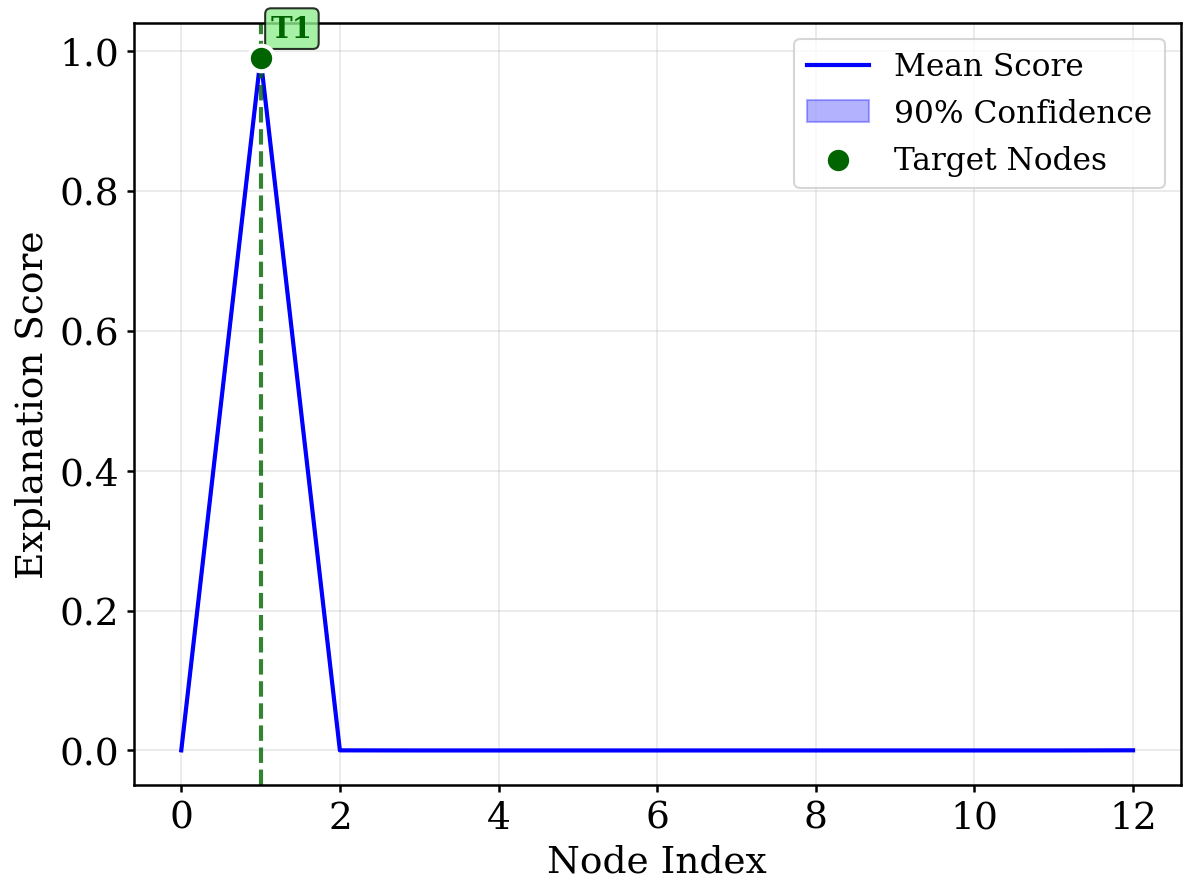}
    \end{subfigure}

    \vspace{0.6cm}

    \begin{subfigure}{0.49\textwidth}
        \includegraphics[width=\linewidth]{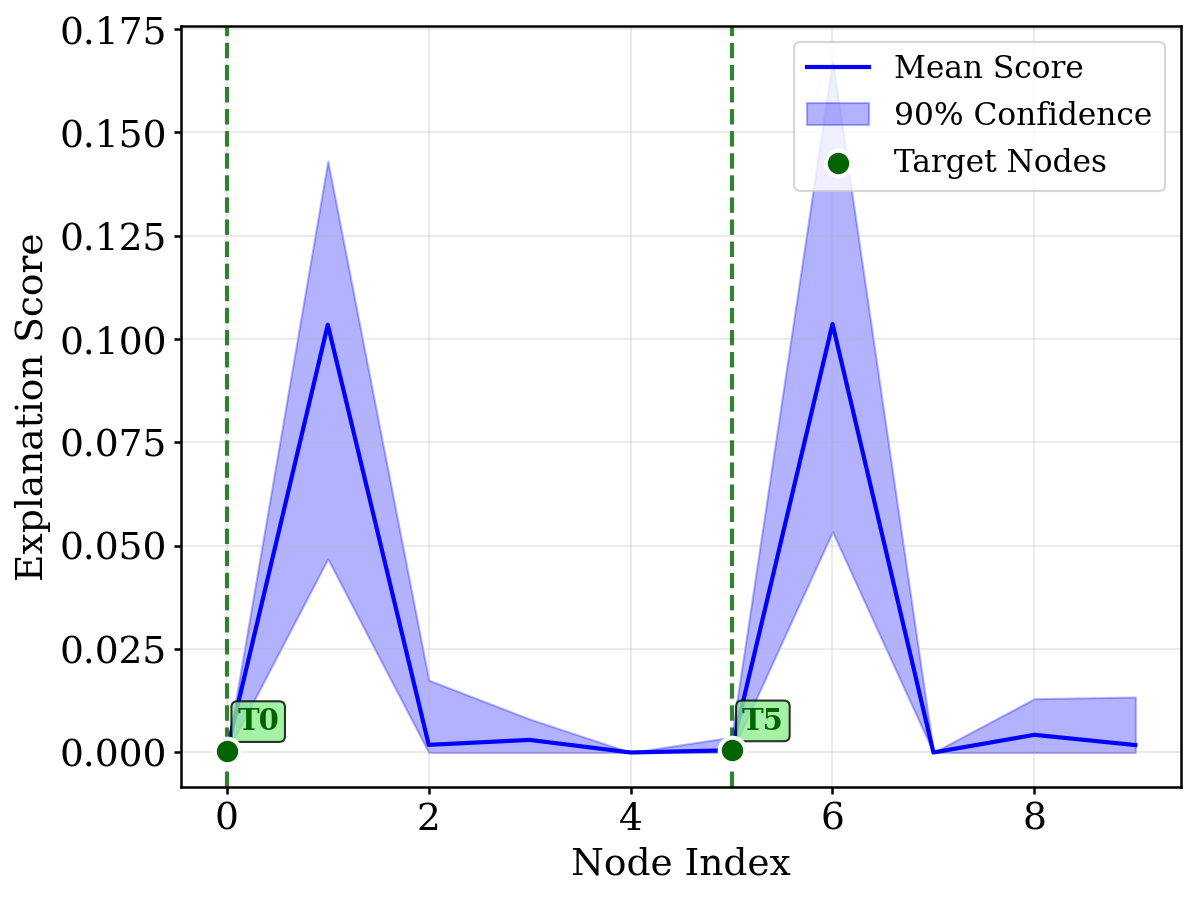}
    \end{subfigure}%
    \hfill
    \begin{subfigure}{0.49\textwidth}
        \includegraphics[width=\linewidth]{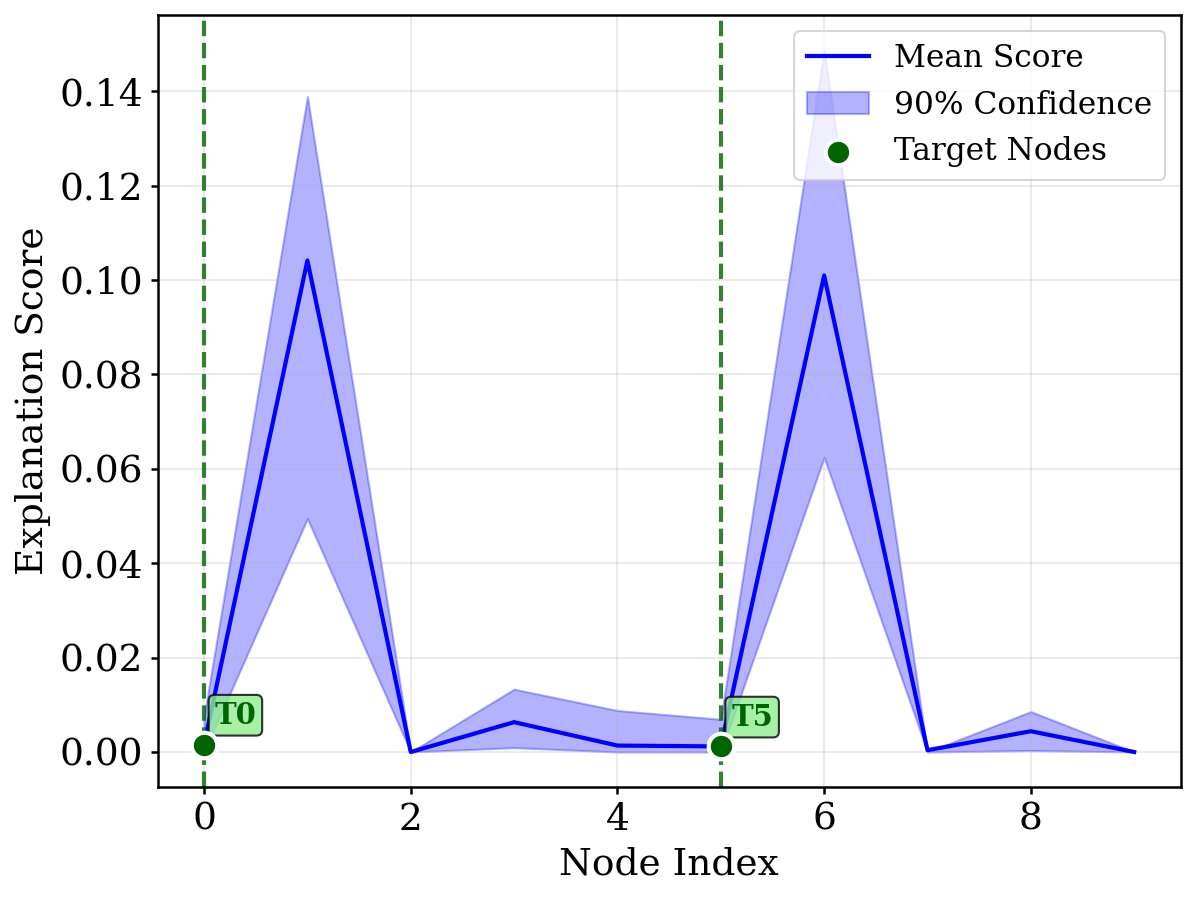}
    \end{subfigure}

   \caption{Effect of Different Perturbation Strategies: Explanation Score variation across surrogates per node due to quantum stochasticity for Case 1 (top) and Case 2 (bottom) under Random Walk perturbation. \bfa{Left} - HSIC-L1 surrogate; \bfa{Right} - HSIC-Group surrogate.}
    
    \label{fig:perturbation_rw}
\end{figure}

\begin{figure}[ht]
    \centering

    \begin{subfigure}{0.49\textwidth}
        \includegraphics[width=\linewidth]{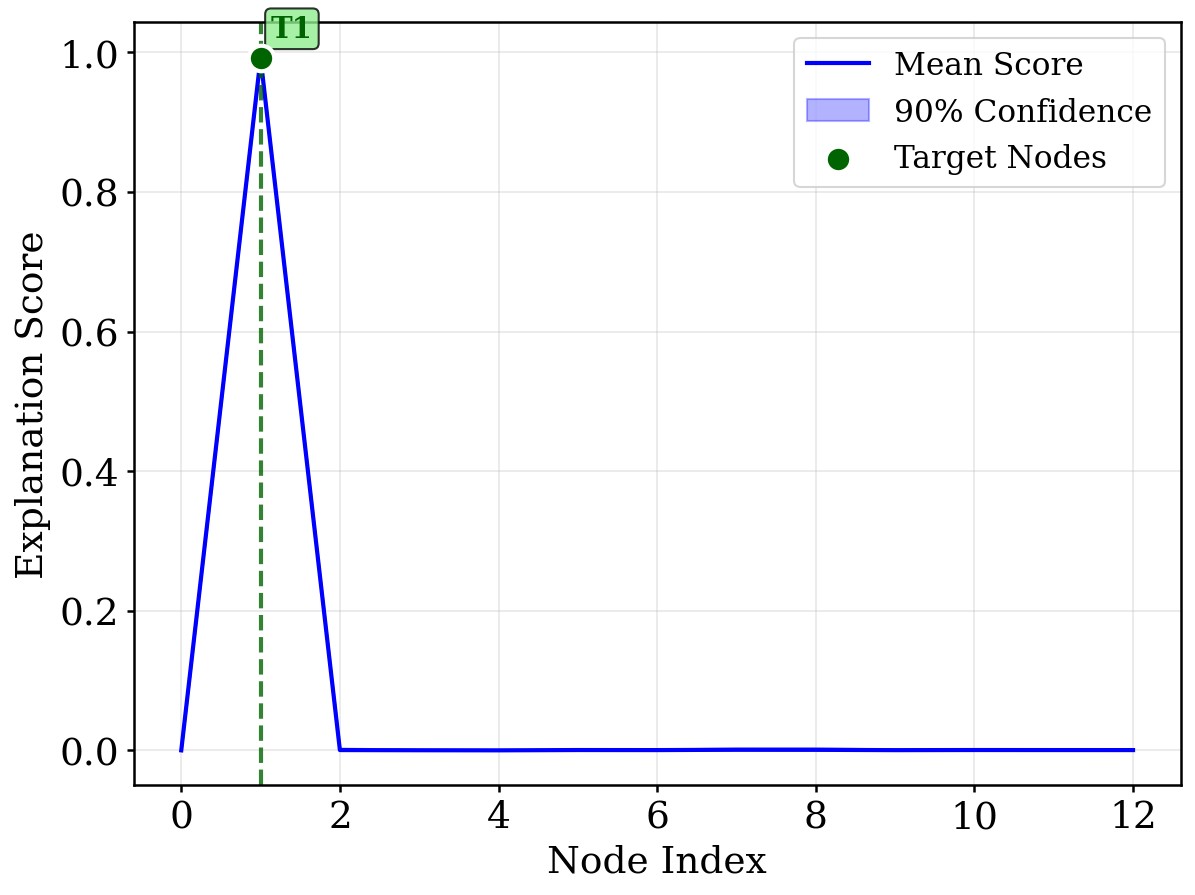}
    \end{subfigure}%
    \hfill
    \begin{subfigure}{0.49\textwidth}
        \includegraphics[width=\linewidth]{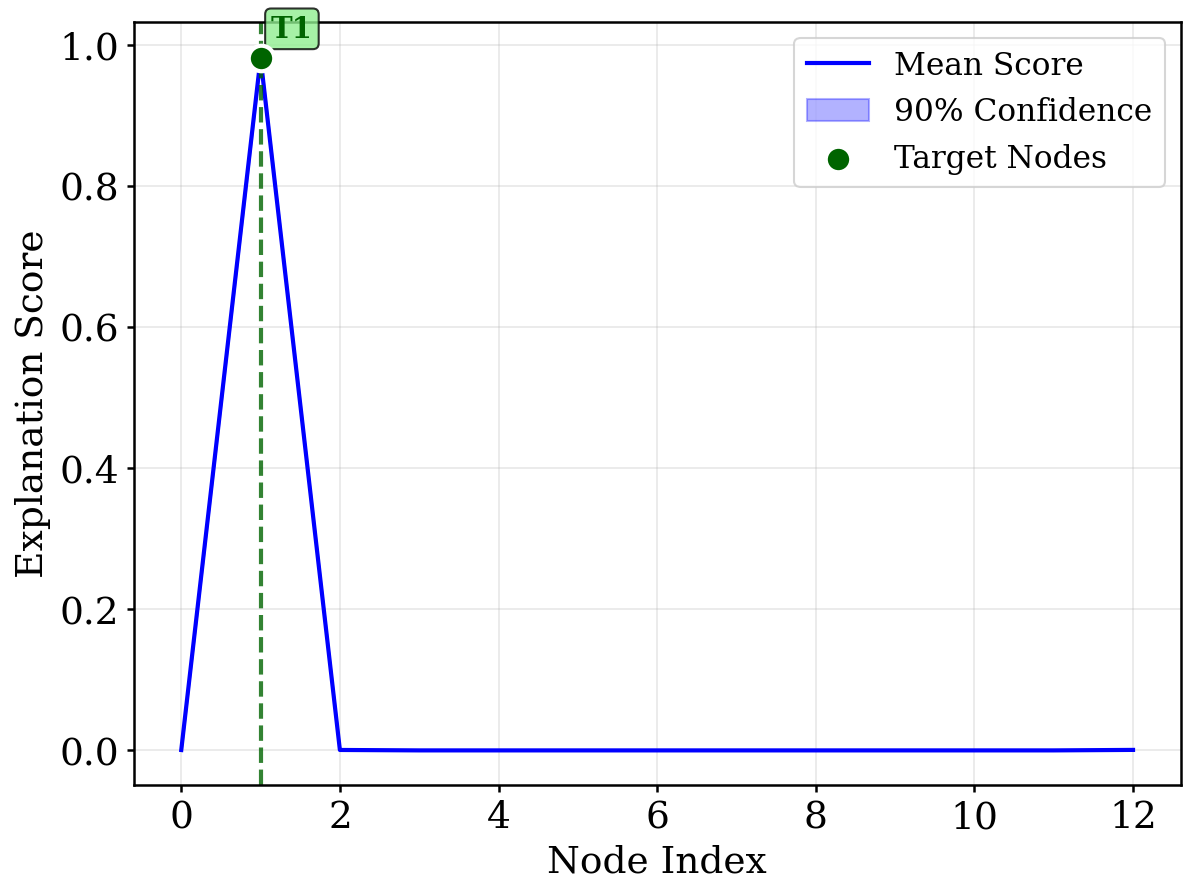}
    \end{subfigure}

    \vspace{0.6cm}

    \begin{subfigure}{0.49\textwidth}
        \includegraphics[width=\linewidth]{sections/images/results/case_2/case_1_2_eqgc_QLIME_HSIC-L1_uncertainty_analysis_graph_9_confidence_bounds.png}
    \end{subfigure}%
    \hfill
    \begin{subfigure}{0.49\textwidth}
        \includegraphics[width=\linewidth]{sections/images/results/case_2/case_1_2_eqgc_QLIME_HSIC-Group_uncertainty_analysis_graph_9_confidence_bounds.png}
    \end{subfigure}

    \caption{Effect of Different Perturbation Strategies:
        Explanation Score variation across surrogates per node due to quantum stochasticity for Case 1 (top) and Case 2 (bottom) under Random Node perturbation. \bfa{Left} - HSIC-L1 surrogate; \bfa{Right} - HSIC-Group surrogate.
    }
    \label{fig:perturbation_rn}
\end{figure}

\begin{table}[ht]
    \centering
    \caption{Effect of Different Perturbation Strategies: Explanation Accuracy - Random Walk Perturbation}
    \begin{tabular}{lcc|cccc}
        \toprule
        Method & \emph{One@1} & \emph{One@3} & \emph{Both@2} & \emph{Both@6} & \emph{One@2} & \emph{One@6} \\
        \midrule
        QGLIME (HSIC-L1) & $\best{1.00 \pm 0.00}$ & $\best{1.00 \pm 0.00}$ & $0.0 \pm 0.00$ & $\sbest{0.1 \pm 0.30}$ & $0.0 \pm 0.00$ & $\sbest{0.9 \pm 0.30}$ \\
        QGLIME (HSIC-G)  & $\best{1.00 \pm 0.00}$ & $\best{1.00 \pm 0.00}$ & $0.0 \pm 0.00$ & $\best{0.5 \pm 0.50}$ & $0.0 \pm 0.00$ & $\best{1.0 \pm 0.00}$ \\
        \bottomrule
    \end{tabular}
    \label{tab:combined_accuracy_rw}
\end{table}

\begin{table}[ht]
    \centering
    \caption{Effect of Different Perturbation Strategies: Metrics - Random Walk Perturbation}
    \begin{tabular}{lccccc}
        \toprule
        Method  & $\fidm$ & $\fidp$ & $\spars$ & $\cons$ & $\ri$ \\
        \midrule
        \multicolumn{6}{l}{\textbf{Case 1: Single-Target}} \\
        \addlinespace[0.3em]
        QGLIME (HSIC-L1) & $\sbest{0.957 \pm 0.028}$ & $\sbest{0.785 \pm 0.294}$ & $\best{0.875 \pm 0.051}$ & $\best{1.000}$ & $\sbest{16.48}$ \\
        QGLIME (HSIC-G)  & $\best{0.988 \pm 0.025}$  & $\best{0.793 \pm 0.300}$  & $\best{0.875 \pm 0.051}$  & $\best{1.000}$ & $\best{17.97}$ \\
        \addlinespace[0.2em]
        \midrule
        \multicolumn{6}{l}{\textbf{Case 2: Dual-Target}} \\
        \addlinespace[0.3em]
        QGLIME (HSIC-L1) & $\best{0.114 \pm 0.074}$ & $\best{0.463 \pm 0.461}$ & $\best{0.762 \pm 0.048}$ & $\sbest{0.04}$ & $\sbest{0.017}$ \\
        QGLIME (HSIC-G)  & $\best{0.114 \pm 0.074}$  & $\best{0.463 \pm 0.461}$  & $\sbest{0.756 \pm 0.053}$ & $\best{0.06}$ & $\best{0.051}$ \\
        \bottomrule
    \end{tabular}
    \label{tab:combined_metrics_rw}
\end{table}

Figures~\ref{fig:perturbation_rw}--\ref{fig:perturbation_rn} and Tables~\ref{tab:combined_accuracy_rw}--\ref{tab:combined_metrics_rw} compare random-walk and random-node perturbations across single-target (Case 1) and dual-target (Case 2) settings. In single-target settings, random-walk perturbations produce highly focused and reproducible explanations, yielding strong scores across surrogate models while maintaining sparsity. This demonstrates that leveraging local graph connectivity stabilizes node importance estimates and isolates the key target effectively. In contrast, multi-target scenarios reveal that random-walk perturbations, while still producing high-explanation scores for selected nodes, tend to underrepresent some influential targets, reducing explanation completeness. Random node perturbations distribute importance more evenly, better capturing all relevant nodes, though with slightly lower per-node confidence. Collectively, these results highlight a tradeoff in perturbation strategy: random-walk perturbations favor sparse, high-confidence explanations suitable for single-target identification, but can compromise coverage and robustness in multi-target contexts. Aligning perturbation choice with the complexity of the explanation task is therefore critical for achieving reliable and interpretable QGraphLIME explanations.

\subsubsection{Effect of Surrogate (Non-)Linearity}
\begin{figure}[ht]
    \centering
    \begin{subfigure}{0.30\textwidth}
        \includegraphics[width=\linewidth]{sections/images/results/case_1/graph_13.png}
    \end{subfigure}%
    \begin{subfigure}{0.33\textwidth}
        \includegraphics[width=\linewidth]{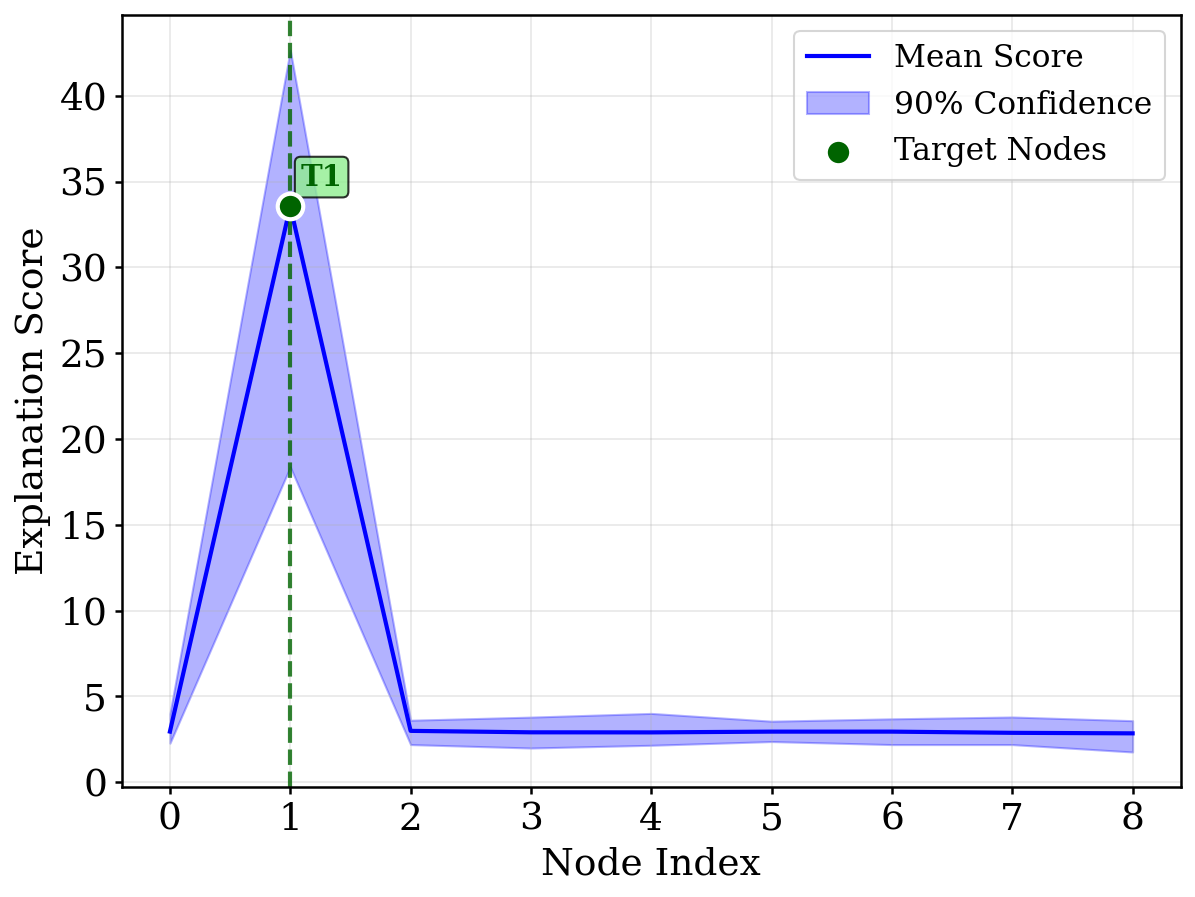}
    \end{subfigure}%
    \begin{subfigure}{0.33\textwidth}
        \includegraphics[width=\linewidth]{sections/images/results/case_1/case_1_1_eqgc_QLIME_HSIC-Group_uncertainty_analysis_graph_13_confidence_bounds.png}
    \end{subfigure}

    \vspace{0.4cm}

    \begin{subfigure}{0.30\textwidth}
        \includegraphics[width=\linewidth]{sections/images/results/case_1/graph_25.png}
    \end{subfigure}%
    \begin{subfigure}{0.33\textwidth}
        \includegraphics[width=\linewidth]{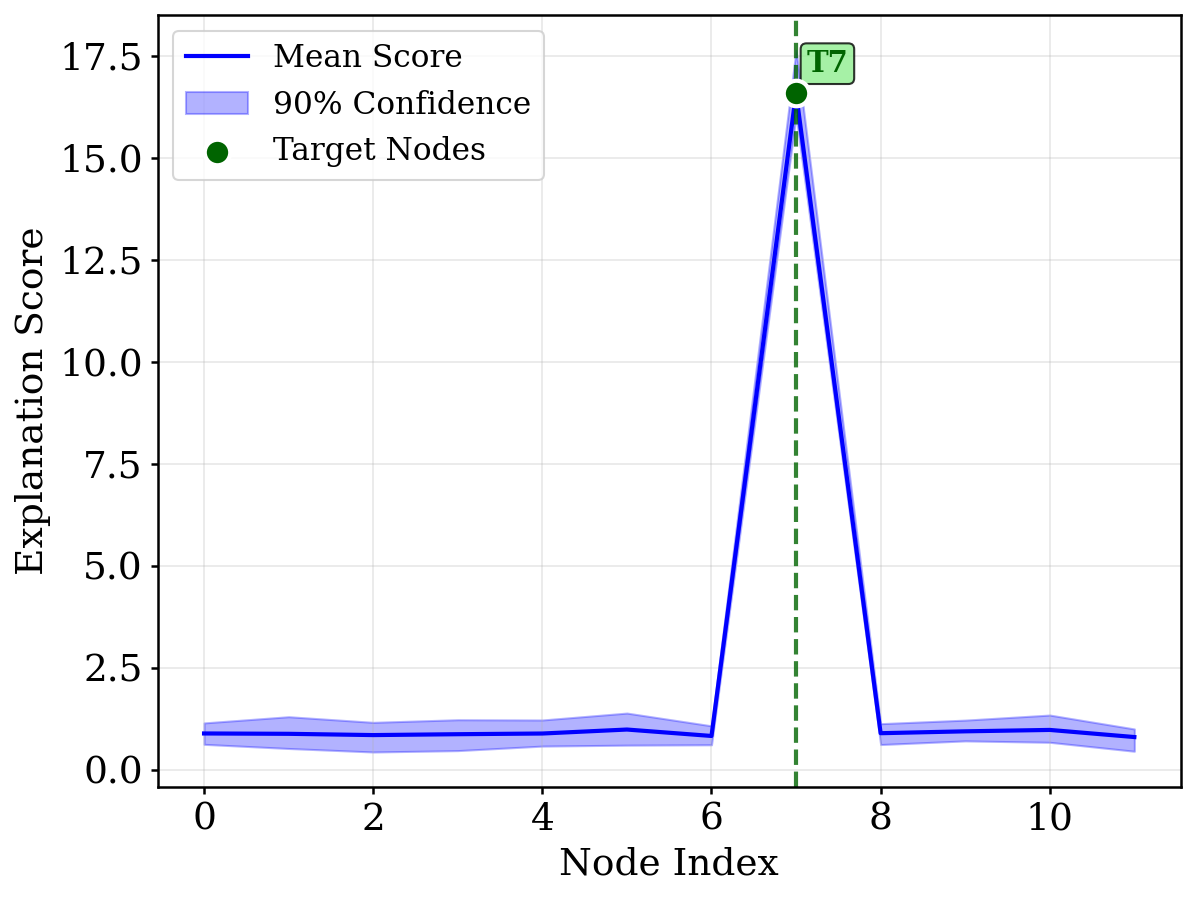}
    \end{subfigure}%
    \begin{subfigure}{0.33\textwidth}
        \includegraphics[width=\linewidth]{sections/images/results/case_1/case_1_1_eqgc_QLIME_HSIC-Group_uncertainty_analysis_graph_25_confidence_bounds.png}
    \end{subfigure}

   \caption{
     Effect of Surrogate (Non-)Linearity: Explanation Score variation across surrogates per node due to quantum stochasticity for Case 1 - Single-Target. \bfa{Left} - Input Graph; \bfa{Center} - Logistic surrogate ; \bfa{Right} - HSIC-Group surrogate.
    }
    \label{fig:lin1}
\end{figure}

\begin{figure}[ht]
    \centering
    \begin{subfigure}{0.30\textwidth}
        \includegraphics[width=\linewidth]{sections/images/results/case_2/graph_27.png}
    \end{subfigure}%
    \begin{subfigure}{0.33\textwidth}
        \includegraphics[width=\linewidth]{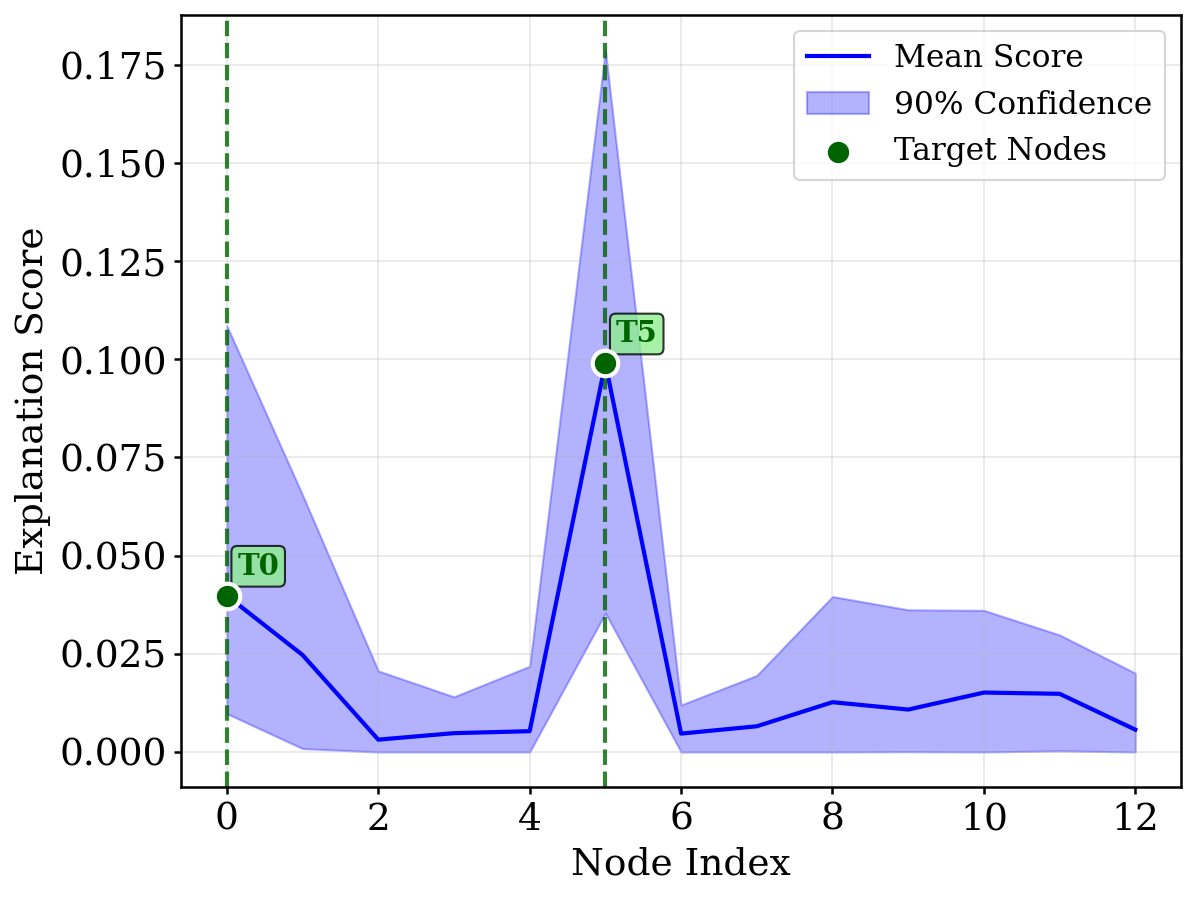}
    \end{subfigure}%
    \begin{subfigure}{0.33\textwidth}
        \includegraphics[width=\linewidth]{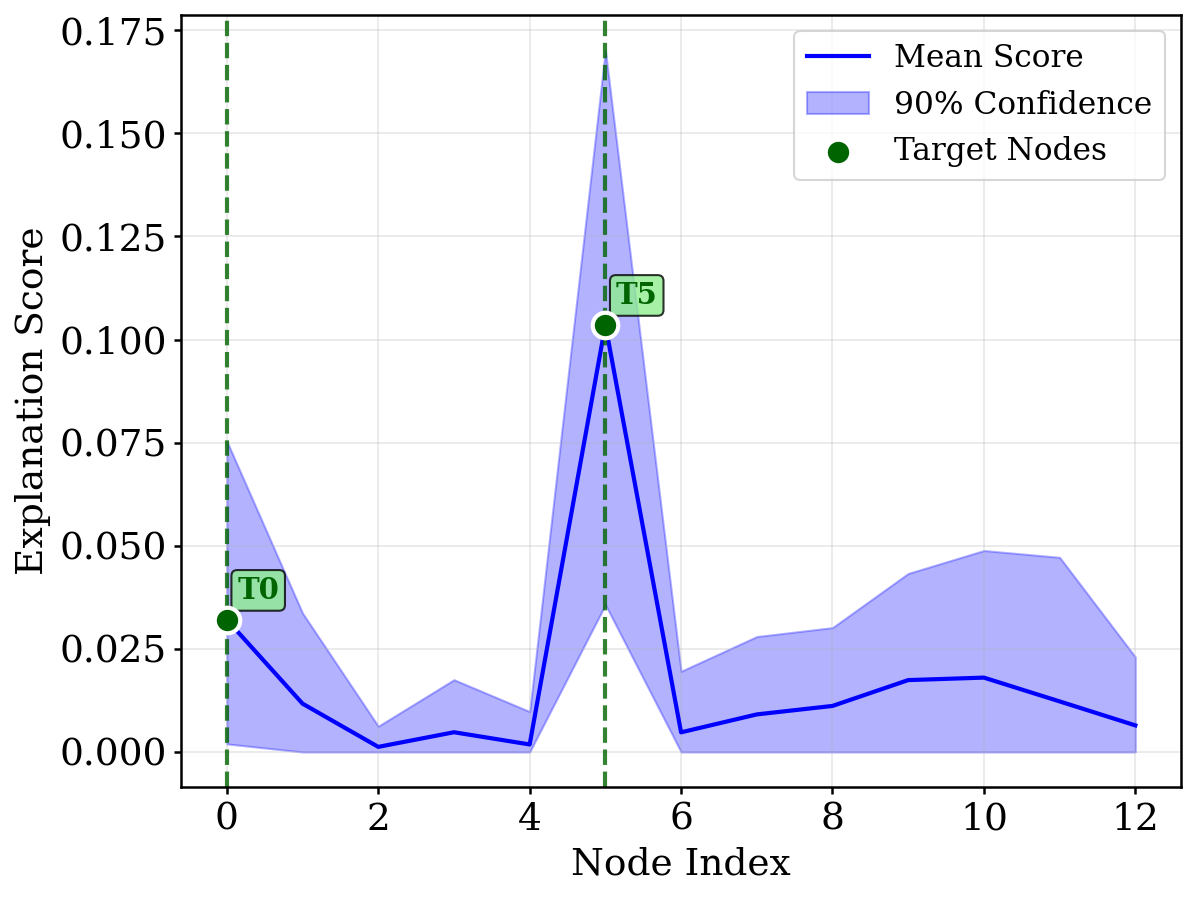}
    \end{subfigure}

    \vspace{0.4cm}

    \begin{subfigure}{0.30\textwidth}
        \includegraphics[width=\linewidth]{sections/images/results/case_2/graph_9.png}
    \end{subfigure}%
    \begin{subfigure}{0.33\textwidth}
        \includegraphics[width=\linewidth]{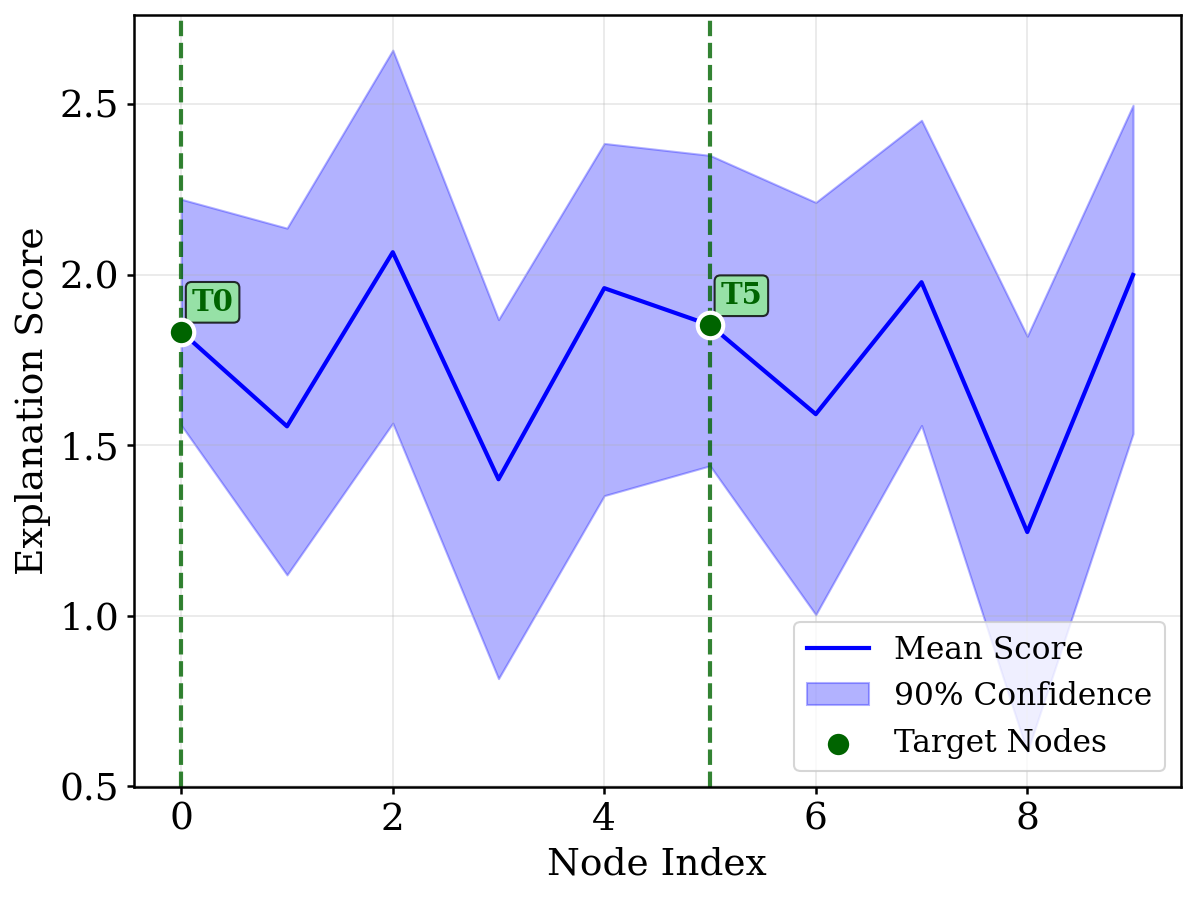}
    \end{subfigure}%
    \begin{subfigure}{0.33\textwidth}
        \includegraphics[width=\linewidth]{sections/images/results/case_2/case_1_2_eqgc_QLIME_HSIC-Group_uncertainty_analysis_graph_9_confidence_bounds.png}
    \end{subfigure}

    \caption{
     Effect of Surrogate (Non-)Linearity: Explanation Score variation across surrogates per node due to quantum stochasticity for Case 2 - Single-Target. \bfa{Left} - Input Graph; \bfa{Center} - Logistic surrogate ; \bfa{Right} - HSIC-Group surrogate.
    }
    \label{fig:lin2}
\end{figure}

\begin{table}[ht]
    \centering
    \caption{Effect of Surrogate (Non-)Linearity: Explanation Accuracy (Case 1 and Case 2)}
    \begin{tabular}{lcc|cccc}
        \toprule
        Method & \emph{One@1} & \emph{One@3} & \emph{Both@2} & \emph{Both@6} & \emph{One@2} & \emph{One@6} \\
        \midrule
        QGLIME (Logistic) & $\sbest{0.90 \pm 0.30}$ & $\best{1.00 \pm 0.00}$ & $\best{0.90 \pm 0.30}$ & $\best{1.00 \pm 0.00}$ & $\best{0.90 \pm 0.30}$ & $\best{1.00 \pm 0.00}$ \\
        QGLIME (HSIC-L1) & $\best{1.00 \pm 0.00}$  & $\best{1.00 \pm 0.00}$  & $\best{0.90 \pm 0.30}$ & $\best{1.00 \pm 0.00}$ & $\best{0.90 \pm 0.30}$ & $\best{1.00 \pm 0.00}$ \\
        QGLIME (HSIC-G)  & $\best{1.00 \pm 0.00}$  & $\best{1.00 \pm 0.00}$  & $\best{0.90 \pm 0.30}$ & $\sbest{0.90 \pm 0.30}$ & $\best{0.90 \pm 0.30}$ & $\sbest{0.90 \pm 0.30}$ \\
        \bottomrule
    \end{tabular}
    \label{tab:combined_accuracy_lin}
\end{table}

\begin{table}[ht]
\centering
\caption{Effect of Surrogate (Non-)Linearity: Metrics (Case 1 and Case 2) }
\begin{tabular}{lccc}
    \toprule
    Method  & $\fidm$ & $\fidp$& $\spars$ \\
    \midrule
    \multicolumn{4}{l}{\textbf{Case 1: Single-Target}} \\
    \addlinespace[0.3em]
    QGLIME (Logistic) & $\sbest{0.969 \pm 0.031}$ & $\best{0.924 \pm 0.197}$ & $\sbest{0.726 \pm 0.363}$ \\
    QGLIME (HSIC-L1) & $0.963 \pm 0.030$ & $0.844 \pm 0.256$ & $\sbest{0.726 \pm 0.363}$ \\
    QGLIME (HSIC-G)  & $\best{0.975 \pm 0.030}$  & $\sbest{0.866 \pm 0.265}$  & $\best{0.746 \pm 0.327}$ \\
    \addlinespace[0.3em]
    \midrule
    \multicolumn{4}{l}{\textbf{Case 2: Dual-Target}} \\
    \addlinespace[0.3em]
    QGLIME (Logistic) & $\best{0.095 \pm 0.282}$  & $\best{0.900 \pm 0.300}$  & $0.000 \pm 0.000$ \\
    QGLIME (HSIC-L1) & $\sbest{0.001 \pm 0.0000}$ & $0.863 \pm 0.290$ & $\best{0.544 \pm 0.231}$ \\
    QGLIME (HSIC-G)  & $\sbest{0.001 \pm 0.0000}$ & $\sbest{0.873 \pm 0.285}$  & $\sbest{0.467 \pm 0.252}$ \\
    \bottomrule
\end{tabular}
\label{tab:combined_metrics_lin}
\end{table}

Figures~\ref{fig:lin1}, \ref{fig:lin2} and Tables~\ref{tab:combined_accuracy_lin}, \ref{tab:combined_metrics_lin} illustrate the impact of surrogate model choice. In single-target scenarios, these nonlinear models achieve perfect target identification with minimal variability, accurately reflecting the local decision boundaries of the QGNN. While linear surrogates can occasionally attain comparable fidelity, their lower confidence metrics indicate reduced reproducibility across surrogate instances. Sparsity is comparable across all surrogates, indicating that the improved reliability of nonlinear models does not compromise explanation conciseness. The benefits of nonlinear surrogates are even more pronounced in dual-target scenarios. They sustain high accuracy, retain meaningful $\fidm$ values, and maintain moderate sparsity, producing explanations that prioritize the most relevant nodes while remaining statistically robust. Linear surrogates, by contrast, exhibit lower $\fidp$ and near-zero sparsity, suggesting less focused and more dispersed attributions. Confidence metrics further confirm that nonlinear models yield explanations that are reproducible and stable, even in complex multi-target contexts. Collectively, these observations highlight that HSIC-regularized nonlinear surrogates are better suited for QGNN interpretability, offering an optimal balance of accuracy, stability, and interpretability, particularly when multiple influential nodes contribute to the model's predictions.

\subsubsection{Effect of Measurement: Single-shot StGraphLIME}
\begin{figure}[ht]
    \centering

    \begin{subfigure}{0.32\textwidth}
        \includegraphics[width=\linewidth]{sections/images/results/case_1/graph_13.png}
    \end{subfigure}%
    \hfill
    \begin{subfigure}{0.32\textwidth}
        \includegraphics[width=\linewidth]{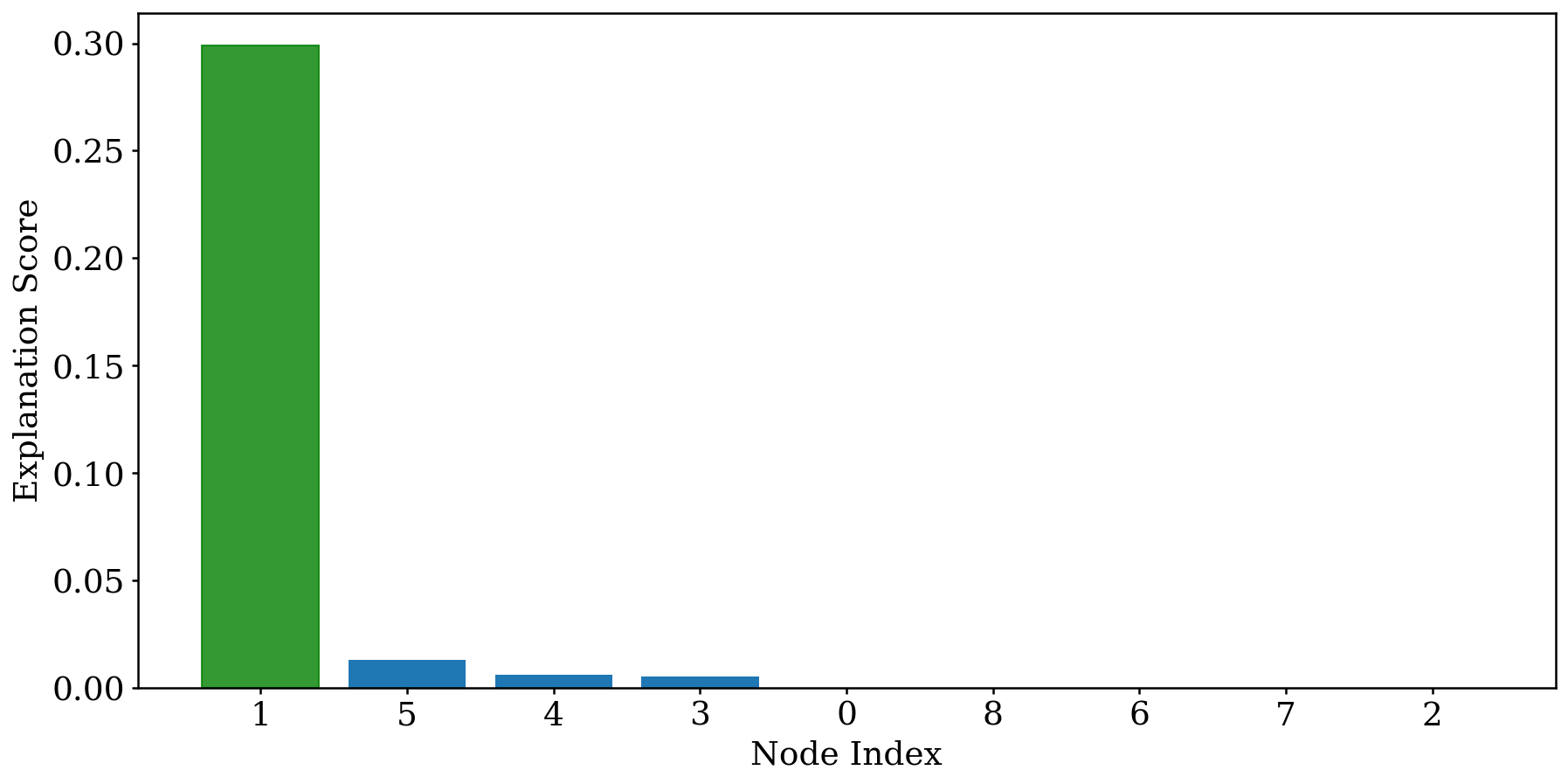}
    \end{subfigure}%
    \hfill
    \begin{subfigure}{0.32\textwidth}
        \includegraphics[width=\linewidth]{sections/images/results/case_1/case_1_1_eqgc_QLIME_HSIC-L1_uncertainty_analysis_graph_13_confidence_bounds.png}
    \end{subfigure}

    \vspace{0.6cm}

    \begin{subfigure}{0.32\textwidth}
        \includegraphics[width=\linewidth]{sections/images/results/case_1/graph_25.png}
    \end{subfigure}%
    \hfill
    \begin{subfigure}{0.32\textwidth}
        \includegraphics[width=\linewidth]{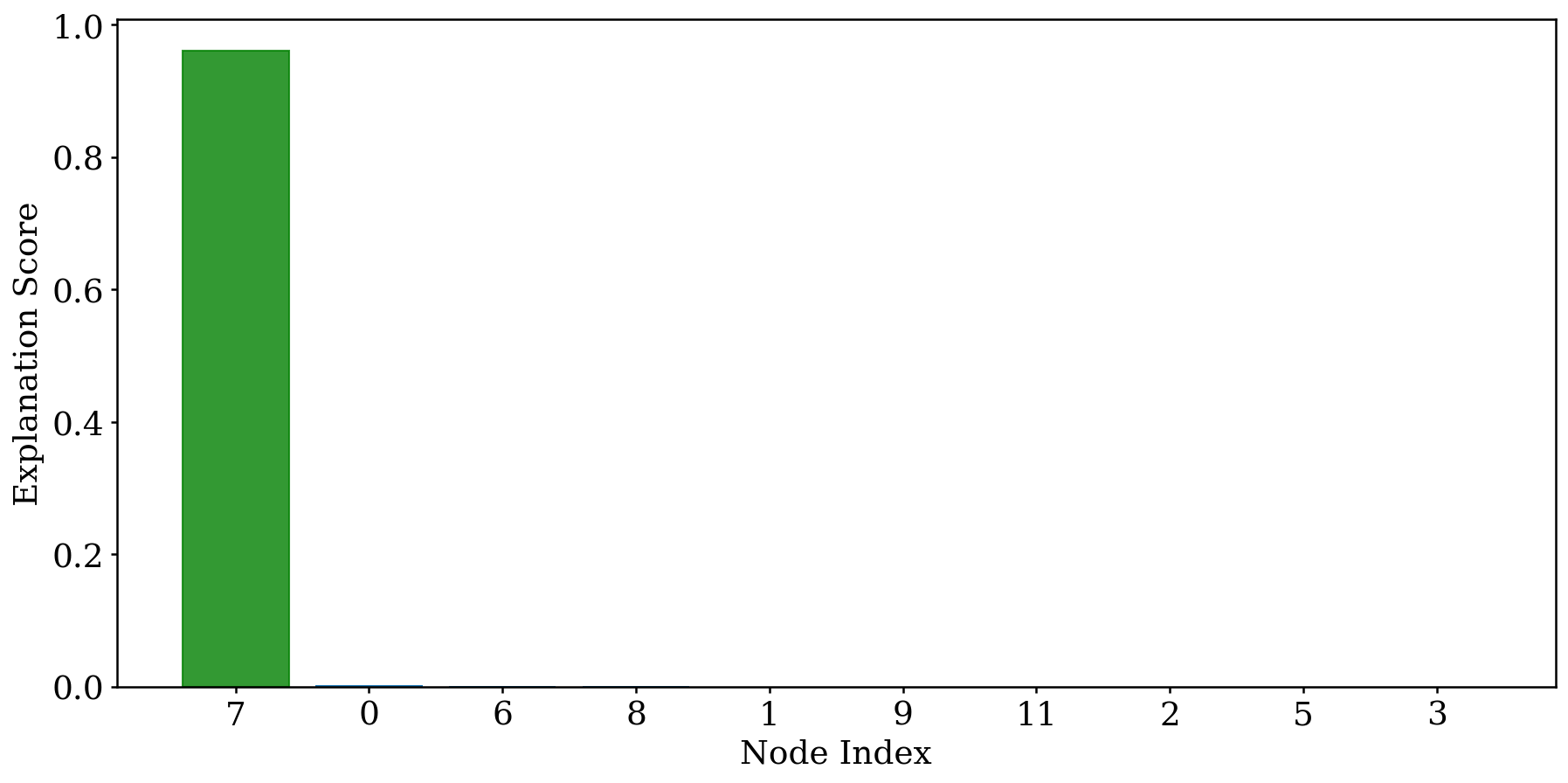}
    \end{subfigure}%
    \hfill
    \begin{subfigure}{0.32\textwidth}
        \includegraphics[width=\linewidth]{sections/images/results/case_1/case_1_1_eqgc_QLIME_HSIC-L1_uncertainty_analysis_graph_25_confidence_bounds.png}
    \end{subfigure}

   \caption{Effect of Measurement: StGraphLIME on QGNN: Explanation Score variation across surrogates per node due to quantum stochasticity for Case 1 - Single-Target. \bfa{Left} - Input Graph;\bfa{Center} - StGraphLIME-HSIC-L1 ; \bfa{Right} - QGLIME-HSIC-L1
    }
    \label{fig:case1_graph_stq_hsicL1}
\end{figure}

\begin{figure}[ht]
    \centering

    \begin{subfigure}{0.32\textwidth}
        \includegraphics[width=\linewidth]{sections/images/results/case_1/graph_13.png}
    \end{subfigure}%
    \hfill
    \begin{subfigure}{0.32\textwidth}
        \includegraphics[width=\linewidth]{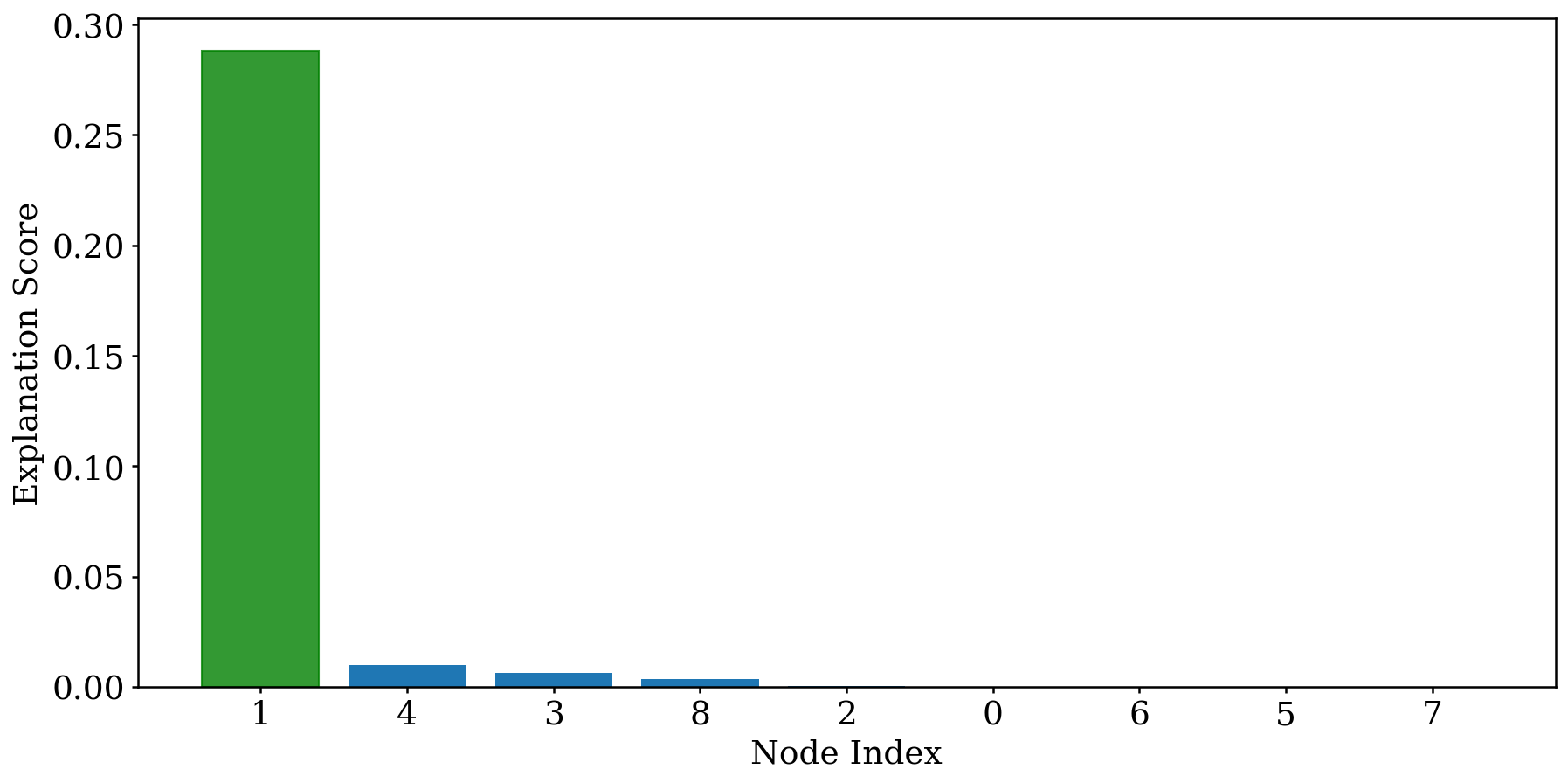}
    \end{subfigure}%
    \hfill
    \begin{subfigure}{0.32\textwidth}
        \includegraphics[width=\linewidth]{sections/images/results/case_1/case_1_1_eqgc_QLIME_HSIC-Group_uncertainty_analysis_graph_31_confidence_bounds.png}
    \end{subfigure}

    \vspace{0.6cm}

    \begin{subfigure}{0.32\textwidth}
        \includegraphics[width=\linewidth]{sections/images/results/case_1/graph_25.png}
    \end{subfigure}%
    \hfill
    \begin{subfigure}{0.32\textwidth}
        \includegraphics[width=\linewidth]{sections/images/results/case_2/7st.png}
    \end{subfigure}%
    \hfill
    \begin{subfigure}{0.32\textwidth}
        \includegraphics[width=\linewidth]{sections/images/results/case_1/case_1_1_eqgc_QLIME_HSIC-Group_uncertainty_analysis_graph_25_confidence_bounds.png}
    \end{subfigure}

      \caption{Effect of Measurement: StGraphLIME on QGNN: Explanation Score variation across surrogates per node due to quantum stochasticity for Case 1 - Single-Target. \bfa{Left} - Input Graph;\bfa{Center} - StGraphLIME-HSIC-G; \bfa{Right} - QGLIME-HSIC-G
    }
    \label{fig:case1_graph_stq_hsicGroup}
\end{figure}

\begin{figure}[ht]
    \centering

    \begin{subfigure}{0.32\textwidth}
        \includegraphics[width=\linewidth]{sections/images/results/case_2/graph_27.png}
    \end{subfigure}%
    \hfill
    \begin{subfigure}{0.32\textwidth}
        \includegraphics[width=\linewidth]{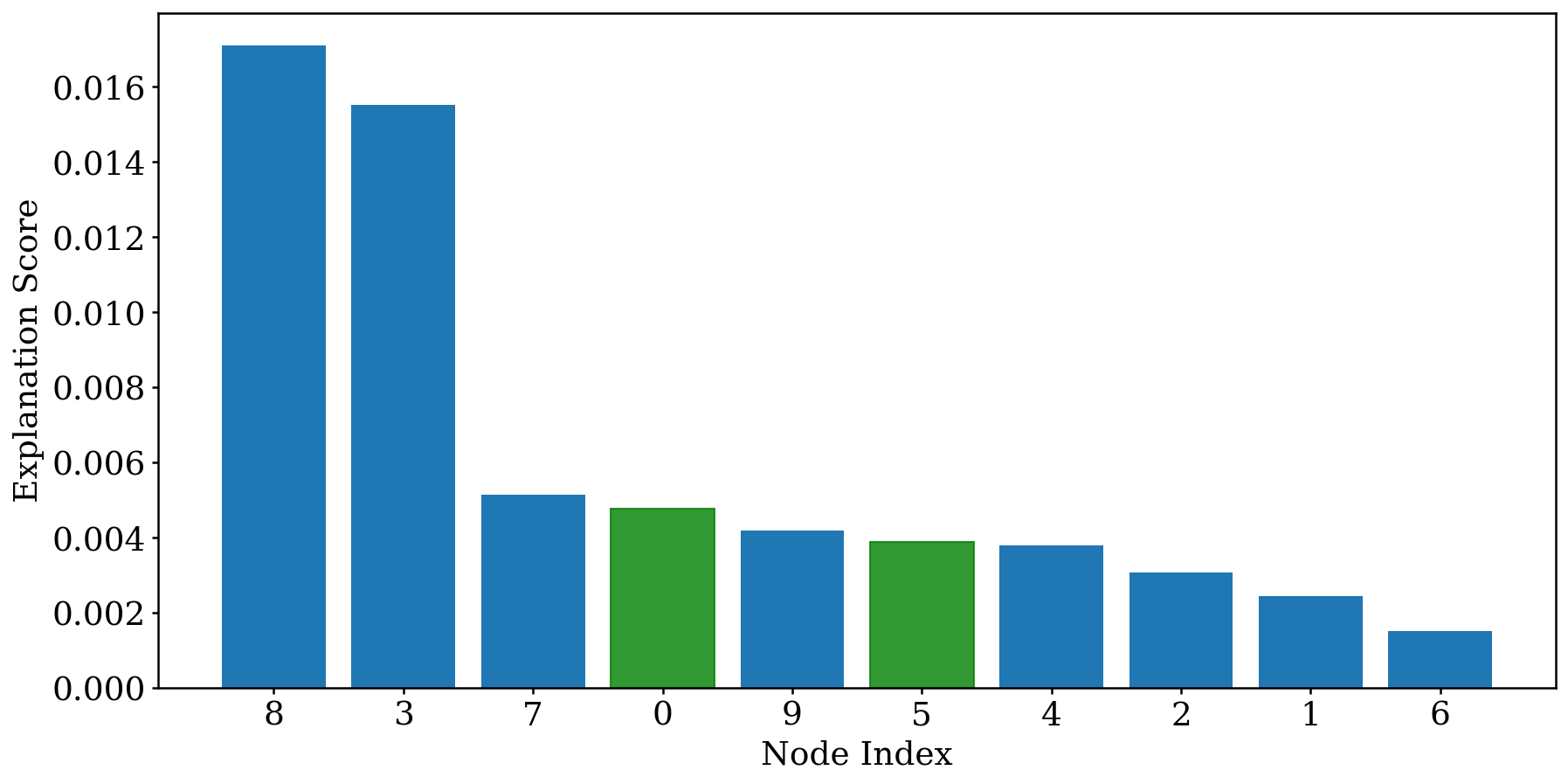}
    \end{subfigure}%
    \hfill
    \begin{subfigure}{0.32\textwidth}
        \includegraphics[width=\linewidth]{sections/images/results/case_2/q05l.png}
    \end{subfigure}

    \vspace{0.6cm}

    \begin{subfigure}{0.32\textwidth}
        \includegraphics[width=\linewidth]{sections/images/results/case_2/graph_9.png}
    \end{subfigure}%
    \hfill
    \begin{subfigure}{0.32\textwidth}
        \includegraphics[width=\linewidth]{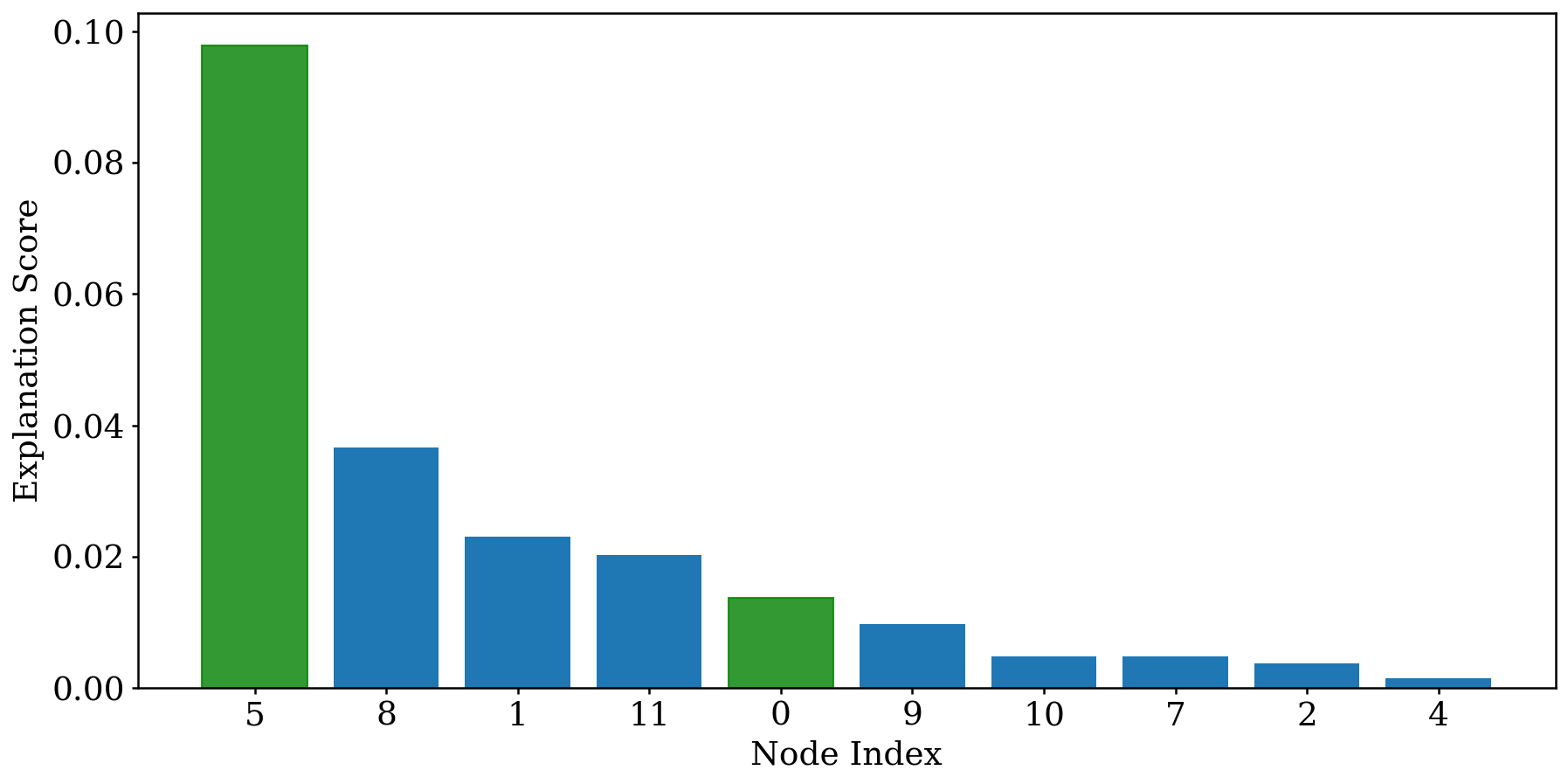}
    \end{subfigure}%
    \hfill
    \begin{subfigure}{0.32\textwidth}
        \includegraphics[width=\linewidth]{sections/images/results/case_2/case_1_2_eqgc_QLIME_HSIC-L1_uncertainty_analysis_graph_9_confidence_bounds.png}
    \end{subfigure}

    \caption{Effect of Measurement: StGraphLIME on QGNN: Explanation Score variation across surrogates per node due to quantum stochasticity for Case 2 - Single-Target. \bfa{Left} - Input Graph;\bfa{Center} - StGraphLIME-HSIC-L1; \bfa{Right} - QGLIME-HSIC-L1
    }

    \label{fig:case2_graph_stq_hsicL1}
\end{figure}

\begin{figure}[ht]
    \centering

    \begin{subfigure}{0.32\textwidth}
        \includegraphics[width=\linewidth]{sections/images/results/case_2/graph_27.png}
    \end{subfigure}%
    \hfill
    \begin{subfigure}{0.32\textwidth}
        \includegraphics[width=\linewidth]{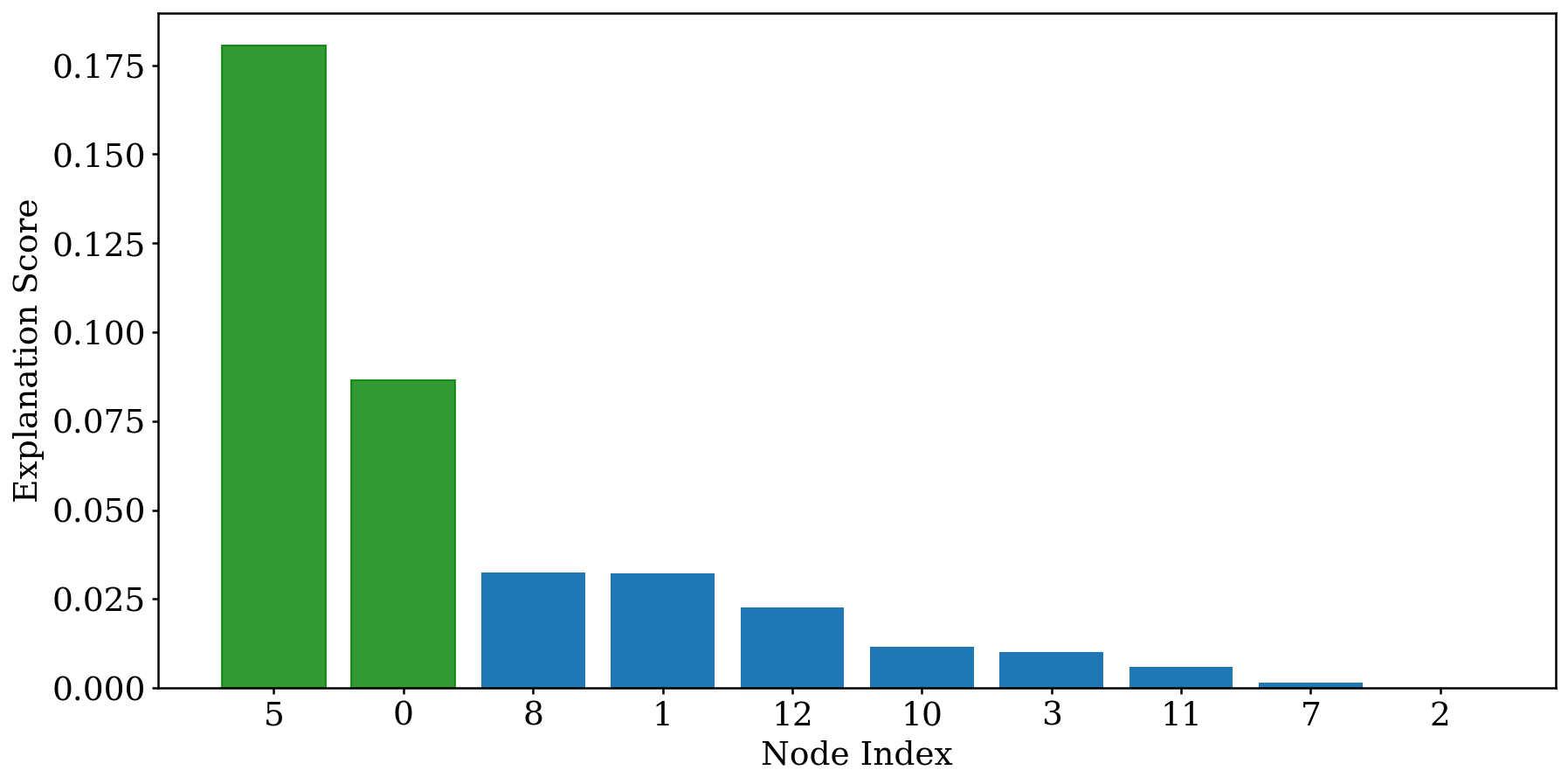}
    \end{subfigure}%
    \hfill
    \begin{subfigure}{0.32\textwidth}
        \includegraphics[width=\linewidth]{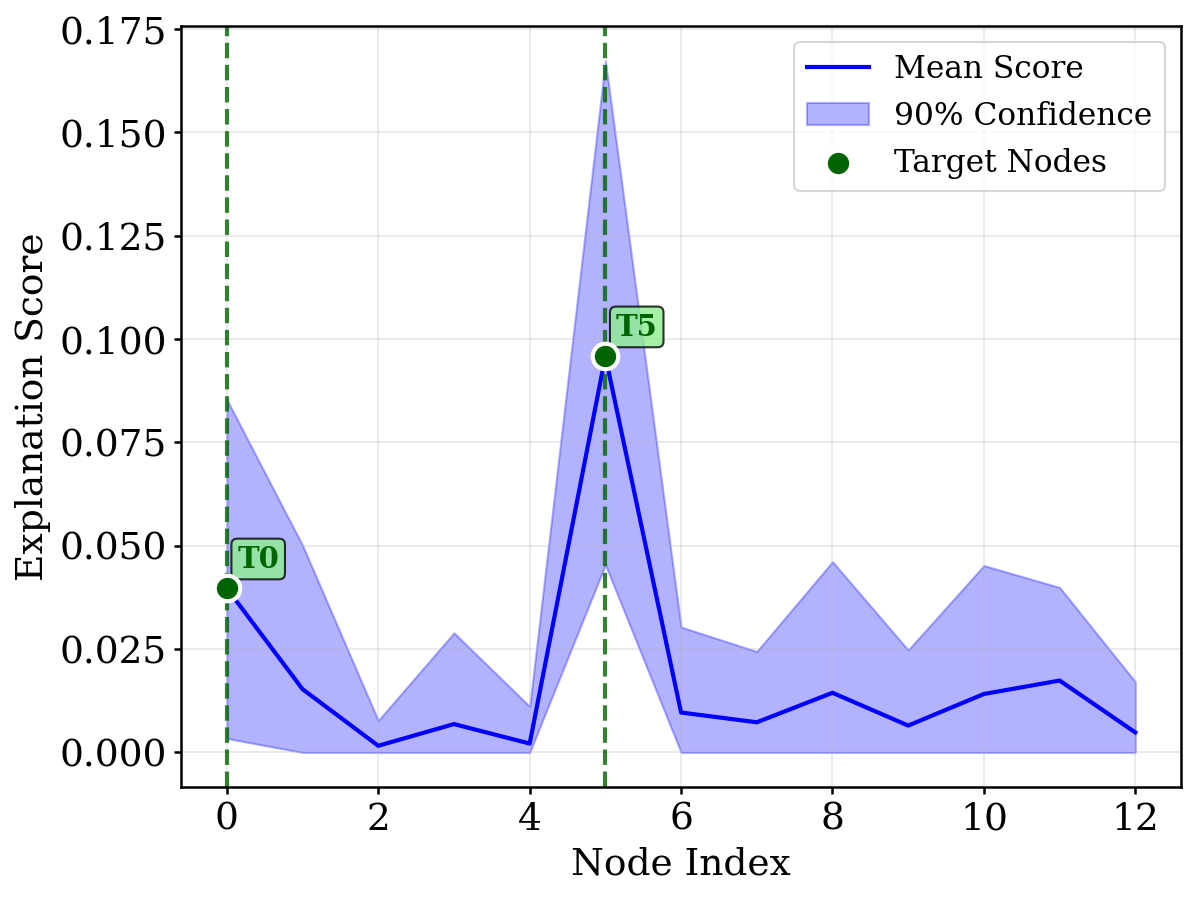}
    \end{subfigure}

    \vspace{0.6cm}

    \begin{subfigure}{0.32\textwidth}
        \includegraphics[width=\linewidth]{sections/images/results/case_2/graph_9.png}
    \end{subfigure}%
    \hfill
    \begin{subfigure}{0.32\textwidth}
        \includegraphics[width=\linewidth]{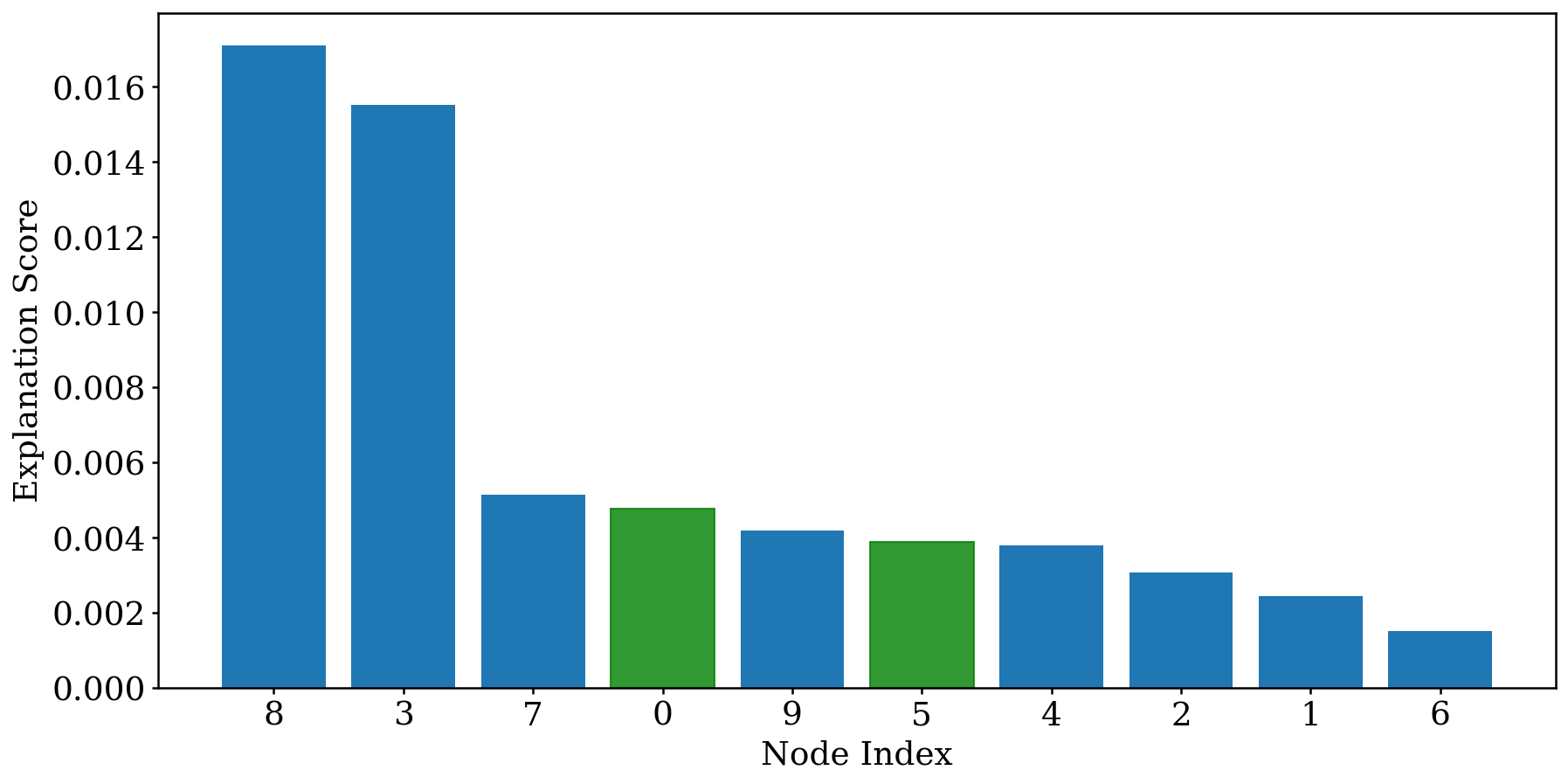}
    \end{subfigure}%
    \hfill
    \begin{subfigure}{0.32\textwidth}
        \includegraphics[width=\linewidth]{sections/images/results/case_2/case_1_2_eqgc_QLIME_HSIC-Group_uncertainty_analysis_graph_9_confidence_bounds.png}
    \end{subfigure}

      \caption{Effect of Measurement: StGraphLIME on QGNN: Explanation Score variation across surrogates per node due to quantum stochasticity for Case 1 - Single-Target. \bfa{Left} - Input Graph; \bfa{Center} - StGraphLIME-HSIC-G ; \bfa{Right} - QGLIME-HSIC-G
    }
    \label{fig:case2_graph_stq_hsicGroup}
\end{figure}

\begin{table}[h]
    \centering
    \caption{Effect of Measurement: StGraphLIME on QGNN - Explanation Accuracy: StGraphLIME vs QGLIME (HSIC-L1 and HSIC-G) under Random Node Perturbation (Case 1 and Case 2)}
    \begin{tabular}{lcc|cccc}
        \toprule
        Method & \emph{One@1} & \emph{One@3} & \emph{Both@2} & \emph{Both@6} & \emph{One@2} & \emph{One@6} \\
        \midrule
        StGLIME (HSIC-L1)    & $\best{1.00 \pm 0.00}$ & $\best{1.00 \pm 0.00}$ & $\sbest{0.40\pm 0.49}$ & $\best{1.00 \pm 0.00}$ & $\sbest{0.90 \pm 0.30}$ & $\best{1.00 \pm 0.00}$ \\
        StGLIME (HSIC-G) & $\best{1.00 \pm 0.00}$ & $\best{1.00 \pm 0.00}$ & $0.30 \pm 0.46$ & $0.80 \pm 0.40$ & $\best{1.00 \pm 0.00}$ & $\best{1.00 \pm 0.00}$ \\
        QGLIME (HSIC-L1)    & $\best{1.00 \pm 0.00}$ & $\best{1.00 \pm 0.00}$ & $\best{0.90 \pm 0.30}$ & $\best{1.00 \pm 0.00}$ & $\sbest{0.90 \pm 0.30}$ & $\best{1.00 \pm 0.00}$ \\
        QGLIME (HSIC-G)     & $\best{1.00 \pm 0.00}$ & $\best{1.00 \pm 0.00}$ & $\best{0.90 \pm 0.30}$ & $\sbest{0.90 \pm 0.30}$ & $\sbest{0.90 \pm 0.30}$ & $\sbest{0.90 \pm 0.30}$ \\
        \bottomrule
    \end{tabular}
    \label{tab:combined_accuracy_stq}
\end{table}

\begin{table}[ht]
\centering
\caption{Effect of Measurement: StGraphLIME on QGNN - Metrics: StGraphLIME vs QGLIME (HSIC-L1 and HSIC-G)}
\begin{tabular}{lccc}
    \toprule
    Method & $\fidm$ & $\fidp$ &  $\spars$ \\
    \midrule
    \multicolumn{4}{l}{\textbf{Case 1: Single-Target}} \\
    \addlinespace[0.3em]
    StGraphLIME (L1)    & $0.963 \pm 0.030$ & $0.866 \pm 0.264$ & $\sbest{0.786 \pm 0.248}$ \\
    StGraphLIME (Group) & $\sbest{0.969 \pm 0.031}$ & $\best{0.918 \pm 0.196}$ & $\best{0.795 \pm 0.221}$ \\
    QGLIME (HSIC-L1)    & $0.963 \pm 0.030$ & $0.844 \pm 0.256$ & $0.726 \pm 0.363$ \\
    QGLIME (HSIC-G)     & $\best{0.975 \pm 0.030}$ & $\sbest{0.866 \pm 0.265}$ & $0.746 \pm 0.327$ \\
    \addlinespace[0.3em]
    \midrule
    \multicolumn{4}{l}{\textbf{Case 2: Dual-Target}} \\
    \addlinespace[0.3em]
    StGraphLIME (L1)    & $\sbest{0.033 \pm 0.065}$ & $0.670 \pm 0.439$ & $\best{0.613 \pm 0.136}$ \\
    StGraphLIME (Group) & $\best{0.049 \pm 0.074}$ & $0.570 \pm 0.466$ & $\sbest{0.569 \pm 0.065}$ \\
    QGLIME (HSIC-L1)    & $0.001 \pm 0.000$ & $\sbest{0.863 \pm 0.290}$ & $0.544 \pm 0.231$ \\
    QGLIME (HSIC-G)     & $0.001 \pm 0.000$ & $\best{0.873 \pm 0.285}$ & $0.467 \pm 0.252$ \\
    \bottomrule
\end{tabular}
\label{tab:combined_metrics_stq}
\end{table}

Figures~\ref{fig:case1_graph_stq_hsicL1}--\ref{fig:case2_graph_stq_hsicGroup} and Tables~\ref{tab:combined_accuracy_stq} \& \ref{tab:combined_metrics_stq} compare single-shot StGraphLIME with multi-shot QGraphLIME, illustrating the impact of measurement strategy. In the single-target setting, both StGraphLIME and QGraphLIME achieve perfect target identification with minimal variability, demonstrating highly consistent and reliable explanations. While QGLIME with HSIC-G shows slightly better surrogate agreement when retaining key nodes, StGraphLIME excels in detecting the effect of node removal and produces more compact explanations, as reflected in higher sparsity values.

For dual-target scenarios, QGLIME's HSIC variants continue to outperform StGraphLIME, maintaining high accuracies and demonstrating robust $\fidm$ scores, whereas StGraphLIME exhibits lower and more variable accuracies. $\fidp$ remains modest across all methods, but StGraphLIME shows marginally higher values, highlighting some sensitivity to target retention. Sparsity levels remain comparable, although StGraphLIME tends to produce slightly more concise explanations.

Overall, these results indicate that while single-shot StGraphLIME can provide stable explanations in simpler, single-target cases, QGraphLIME's multi-shot approach is better suited for capturing complex multi-target dependencies. It should be noted that the reduced performance and variability observed in Case 2 partly stem from quantum hardware limitations, which constrain qubit counts and the expressiveness of the surrogate explanations for larger or more intricate graphs.

The limitation in Case 2 arises from quantum hardware constraints limiting qubit counts, which restrict the model's training and evaluation on larger, more complex datasets. This inherently constrains the expressiveness and generalizability of these explanation methods in multi-target cases. Consequently, the reduced performance variability and lower confidence metrics in Case 2 can be partly attributed to these limitations, highlighting the need for advanced quantum resources or simulation techniques to scale interpretability studies.

\subsubsection{Effect of Regularization on Surrogates}

\begin{table}[ht]
    \centering
    \caption{Effect of Regularization: Explanation Accuracy (Case 1 and Case 2) for QGLIME (HSIC-G) at varying \(\lambda\)}
    \begin{tabular}{lcc|cccc}
        \toprule
        \(\lambda\) & \emph{One@1} & \emph{One@3} & \emph{Both@2} & \emph{Both@6} & \emph{One@2} & \emph{One@6} \\
        \midrule
        1  & $\best{1.00 \pm 0.00}$ & $\best{1.00 \pm 0.00}$ & $0.10 \pm 0.30$ & $0.20 \pm 0.40$ & $\best{0.80 \pm 0.40}$ & $\best{1.00 \pm 0.00}$ \\
        $10^{-1}$ & $\best{1.00 \pm 0.00}$ & $\best{1.00 \pm 0.00}$ & $0.10 \pm 0.30$ & $0.20 \pm 0.40$ & $\best{0.80 \pm 0.40}$ & $\best{1.00 \pm 0.00}$ \\
        $10^{-2}$ & $\best{1.00 \pm 0.00}$ & $\best{1.00 \pm 0.00}$ & $\best{0.80 \pm 0.40}$ & $\best{0.80 \pm 0.40}$ & $\best{0.80 \pm 0.40}$ & $\best{1.00 \pm 0.00}$ \\
        $10^{-3}$ & $\best{1.00 \pm 0.00}$ & $\best{1.00 \pm 0.00}$ & $\best{0.80 \pm 0.40}$ & $\best{0.80 \pm 0.40}$ & $\best{0.80 \pm 0.40}$ & $\best{1.00 \pm 0.00}$ \\
        $10^{-4}$ & $\best{1.00 \pm 0.00}$ & $\best{1.00 \pm 0.00}$ & $\best{0.80 \pm 0.40}$ & $\best{0.80 \pm 0.40}$ & $\best{0.80 \pm 0.40}$ & $\best{1.00 \pm 0.00}$ \\
        \bottomrule
    \end{tabular}
    \label{tab:combined_accuracy_lambda}
\end{table}

\begin{table}[ht]
\centering
\caption{Effect of Regularization: Metrics (Case 1 and Case 2) for QGLIME (HSIC-G) at varying \(\lambda\)}
\begin{tabular}{lccccc}
    \toprule
    \(\lambda\) & $\fidm$ & $\fidp$  & $\spars$ & $\cons$ & $\ri$ \\
    \midrule
    \multicolumn{6}{l}{\textbf{Case 1: Single-Target}} \\
    \addlinespace[0.3em]
    $1$  & $0.734 \pm 0.324$ & $\best{0.988 \pm 0.025}$ & $0.409 \pm 0.418$ & $\best{1.0}$ & $0.10$ \\
    $10^{-1}$ & $\best{0.930 \pm 0.198}$ & $0.963 \pm 0.030$ & $\sbest{0.608 \pm 0.378}$ & $\best{1.0}$ & $7.39$ \\
    $10^{-2}$ & $\sbest{0.799 \pm 0.303}$ & $\sbest{0.975 \pm 0.030}$ & $\best{0.628 \pm 0.349}$ & $\best{1.0}$ & $\best{8.61}$ \\
    $10^{-3}$ & $0.798 \pm 0.302$ & $0.963 \pm 0.030$ & $0.568 \pm 0.426$ & $\best{1.0}$ & $\sbest{7.76}$ \\
    $10^{-4}$ & $0.798 \pm 0.302$ & $0.963 \pm 0.030$ & $0.568 \pm 0.426$ & $\best{1.0}$ & $7.76$ \\
    \addlinespace[0.2em]
    \midrule
    \multicolumn{6}{l}{\textbf{Case 2: Dual-Target}} \\
    \addlinespace[0.3em]
    $1$  & $0.102 \pm 0.299$ & $\best{0.375 \pm 0.401}$ & $0.000 \pm 0.000$ & $0.16$ & $0.10$ \\
    $10^{-1}$ & $0.780 \pm 0.385$ & $0.001 \pm 0.000$ & $\best{0.428 \pm 0.233}$ & $\best{0.26}$ & $0.35$ \\
    $10^{-2}$ & $0.765 \pm 0.378$ & $0.001 \pm 0.000$ & $\best{0.428 \pm 0.233}$ & $\sbest{0.22}$ & $0.69$ \\
    $10^{-3}$ & $\sbest{0.781 \pm 0.385}$ & $0.001 \pm 0.000$ & $\best{0.428 \pm 0.233}$ & $0.20$ & $\best{0.73}$ \\
    $10^{-4}$ & $\best{0.781 \pm 0.385}$ & $0.001 \pm 0.000$ & $\best{0.428 \pm 0.233}$ & $\sbest{0.22}$ & $\sbest{0.72}$ \\
    \bottomrule
\end{tabular}
\label{tab:combined_metrics_lambda}
\end{table}

Tables~\ref{tab:combined_accuracy_lambda} and \ref{tab:combined_metrics_lambda} summarize the effect of varying the HSIC-Group regularization parameter (\(\lambda\)) on QGraphLIME explanations in single-target and dual-target scenarios.

In the single-target case, explanatory performance remains robust across a wide range of \(\lambda\) values, indicating stable target identification and consistent fidelity. Even with moderate variation in \(\lambda\), sparsity and confidence metrics remain largely unaffected, suggesting that single-target explanations are resilient to regularization strength.

Conversely, in the dual-target scenario, weaker regularization leads to a noticeable drop in both accuracy and confidence, reflecting difficulty in capturing multiple relevant nodes simultaneously. Explanations become less sparse and more diffuse, compromising interpretability. Stronger regularization, by contrast, maintains focused, concise, and reliable explanations, underscoring its importance in complex multi-target settings. Empirically, performance plateaus for \(\lambda \ge 10^{-2}\), suggesting an optimal balance between surrogate regularization and explanation fidelity.

These findings collectively highlight that careful tuning of HSIC regularization is crucial for maintaining explanation quality, particularly when explaining multi-target QGNN decisions where interpretability, accuracy, and sparsity must all be preserved.

\section{Conclusion}

This work introduces \emph{QGraphLIME}, a model-agnostic framework for explaining Quantum Graph Neural Networks (QGNNs). The method fits an ensemble of HSIC-based local surrogates on locality-preserving graph perturbations, and aggregates their attributions and dispersion to produce stable, uncertainty-aware node or edge rankings under quantum measurement-induced stochasticity. The framework further derives a distribution-free, sharp sample-complexity bound for the surrogate ensemble via the Dvoretzky-Kiefer-Wolfowitz inequality, providing a finite-sample guarantee on the minimum number of surrogates required to uniformly approximate the measurement-induced distribution of a binary class probability at a prescribed accuracy and confidence under i.i.d.\ assumptions.

Empirically, across controlled synthetic graphs with known targets, QGraphLIME achieves high top-$k$ accuracy and label/probability fidelity in single-target tasks. Its uncertainty summaries—including Top-$k$ inclusion probability, interquartile ranges, and flip probabilities—highlight regions of stability and indecision. Ablation studies demonstrate that nonlinear HSIC surrogates and appropriately chosen perturbations materially enhance explanation reliability relative to linear baselines. These results underscore the necessity of quantum-specific explainability for QGNNs: nonlinear HSIC-based surrogates effectively capture quantum-induced dependencies, while ensemble-based uncertainty metrics provide robust, multi-dimensional measures of explanation reliability, enhancing trust in high-stakes quantum applications.

\subsection*{Limitations}

Despite distribution-free guarantees and strong empirical performance, several limitations remain. First, perturbation design significantly affects results: random-walk perturbations, while structure-aware, can reduce explanation completeness in multi-target settings by overemphasizing local connectivity, whereas random node perturbations better capture all influential nodes at the cost of slightly lower per-node confidence. This illustrates a task-dependent trade-off between locality preservation and comprehensive coverage of causal substructures.

Second, this study focuses exclusively on Equivariant Quantum Graph Circuits (EQGCs) as the target QGNN family. Extending evaluations to alternative architectures and readout strategies is necessary to establish the generality of the uncertainty bounds and the behavior of HSIC-based surrogates under different parameter-sharing schemes and measurement observables. Moreover, experiments are conducted on small synthetic datasets with explicit ground-truth targets due to NISQ-era constraints and the exponential cost of classical statevector simulations. Consequently, external validity is limited, and scaling to larger, real-world graphs is required to assess robustness under realistic structural heterogeneity and hardware noise.

Third, the theoretical guarantees assume i.i.d.\ surrogate-level statistics. In practice, correlated randomness or hardware drift can reduce effective sample sizes, making the DKW-based ensemble budget conservative. Dependence diagnostics and adjusted ensemble sizing may be necessary. Finally, fidelity under node retention or removal depends on masking semantics and graph reindexing, which can influence message-passing statistics. Although a consistent protocol is applied here, alternative intervention choices could modify measured fidelity metrics. More broadly, classical surrogates cannot directly capture intrinsically quantum properties such as phase, entanglement, and non-commuting measurements, motivating future extensions that incorporate quantum-aware features and error-mitigated measurement schemes.

\subsection*{Future Work}

Future research will prioritize deploying QGraphLIME on real quantum hardware to evaluate performance under realistic noise conditions and integrating it with quantum error mitigation techniques to enhance reliability. Benchmarking on real-world datasets will provide critical insights into the framework's scalability and practical applicability in authentic quantum machine learning scenarios. Additionally, extending evaluations across diverse QGNN architectures will help establish the generality and robustness of QGraphLIME across different model families. On the theoretical front, efforts will focus on deriving tighter uncertainty bounds and refining surrogate model estimation guarantees, thereby strengthening the formal foundations of quantum explainability. Collectively, these directions aim to advance QGraphLIME toward a fully deployable and theoretically rigorous framework for interpretable quantum graph learning.

\bibliographystyle{tmlr}
\bibliography{qgxai}

\end{document}